\documentclass{article}

\PassOptionsToPackage{numbers, compress}{natbib}



\usepackage[final]{neurips_2024}

\usepackage{microtype}
\usepackage{graphicx}
\usepackage{subfigure}
\usepackage{booktabs} 
\usepackage{hyperref}

\usepackage{multirow}
\usepackage{amssymb}
\usepackage{algorithmic}
\usepackage{mathtools}
\usepackage{amsthm}
\usepackage{amsmath}
\usepackage[capitalize,noabbrev]{cleveref}
\usepackage[utf8]{inputenc} 
\usepackage[T1]{fontenc}    
\usepackage{booktabs}       
\usepackage{nicefrac}       
\usepackage[table]{xcolor}         
\usepackage{latexsym}
\usepackage{multirow}
\usepackage{makecell}
\usepackage{enumitem}
\usepackage{graphicx}
\usepackage{setspace}
\usepackage{subfigure}
\usepackage{latexsym}
\usepackage{inconsolata}
\usepackage{bm}
\usepackage{wrapfig}
\usepackage{arydshln}
\usepackage{hyperref}
\usepackage{url}
\usepackage{pifont}
\usepackage{bbding}
\usepackage[misc]{ifsym}
\usepackage{microtype}
\usepackage{setspace}
\usepackage{hyperref}

\definecolor{cpgcolor}{hsb}{0.7, 0.1, 0.98}

\newcommand{\cmark}{\ding{51}}%
\newcommand{\xmark}{\ding{55}}%

\newtheorem{lemma}{Lemma}
\newtheorem{theorem}{Theorem}

\title{Advancing Spiking Neural Networks for Sequential Modeling with Central Pattern Generators}

\author{%
\textbf{Changze Lv}$^{1}$\footnotemark[1] \quad
\textbf{Dongqi Han}$^{2}$\footnotemark[2] \quad
\textbf{Yansen Wang}$^{2}$\footnotemark[2] \quad
\textbf{Xiaoqing Zheng}$^{1}$\footnotemark[2]\\
\textbf{Xuanjing Huang}$^{1}$ \quad
\textbf{Dongsheng Li}$^{2}$ \\
$^1$School of Computer Science, Fudan University \\ 
$^2$Microsoft Research Asia \\
\texttt{\{czlv22\}@m.fudan.edu.cn},
\texttt{\{zhengxq,xjhuang\}@fudan.edu.cn},\\
\texttt{\{yansenwang,dongqihan,dongsli\}@microsoft.com} \\
}

\begin{document}

\maketitle

\footnotetext[1]{The work was conducted during the internship of Changze Lv at Microsoft Research Asia.}
\footnotetext[2]{Corresponding authors.}

\begin{abstract}
Spiking neural networks (SNNs) represent a promising approach to developing artificial neural networks that are both energy-efficient and biologically plausible.
However, applying SNNs to sequential tasks, such as text classification and time-series forecasting, has been hindered by the challenge of creating an effective and hardware-friendly spike-form positional encoding (PE) strategy.
Drawing inspiration from the central pattern generators (CPGs) in the human brain,  which produce rhythmic patterned outputs without requiring rhythmic inputs, we propose a novel PE technique for SNNs, termed CPG-PE.
We demonstrate that the commonly used sinusoidal PE is mathematically a specific solution to the membrane potential dynamics of a particular CPG.
Moreover, extensive experiments across various domains, including time-series forecasting, natural language processing, and image classification, show that SNNs with CPG-PE outperform their conventional counterparts.
Additionally, we perform analysis experiments to elucidate the mechanism through which SNNs encode positional information and to explore the function of CPGs in the human brain.
This investigation may offer valuable insights into the fundamental principles of neural computation.
Our code is available at \url{https://github.com/microsoft/SeqSNN}.
\end{abstract}

\section{Introduction} \label{sec:introduction}

Spiking neural network (SNN) \citep{Maas1997NetworksOS} has increasingly attracted research interests in recent years, primarily due to its energy efficiency, event-driven paradigm, biological plausibility, and other distinctive properties.
The spiking neurons in SNN are dynamical systems that generate binary signals (spike or non-spike) and communicate these signals like artificial neural networks (ANNs) for computation \citep{Fang2021DeepRL,Ding2021OptimalAC,Zhou2022SpikformerWS,yao2023spike,yao2024spikedriven,Fang2020IncorporatingLM,lv2023spiking,li2024seenn}.
Many advanced architectures and methodologies developed for ANNs are also applicable to SNNs, enhancing their capabilities.
Notable among these are backpropagation \citep{wu2018spatio}, batch normalization \citep{zheng2021going,duan2022temporal}, and Transformer architecture \citep{Zhou2022SpikformerWS, zhou2023spikingformer,yao2023spike,yao2024spikedriven}, which collectively broaden the functional scope of SNNs.



Despite the promising advances in SNNs, several challenges persist when adapting them to diverse tasks.
A fundamental challenge is that SNNs, which are event-triggered, lack robust and effective mechanisms to capture indexing information, rhythmic patterns, and periodic data.
This limitation can adversely affect SNNs' ability to process and analyze different data modalities, including natural language, and time series.
Meanwhile, while SNNs aim to emulate the neural circuits of the brain, their reliance on spike-based communication imposes limitations. 
Consequently, not all deep learning techniques applicable to ANNs can be directly transferred to SNNs.
For instance, methods like HiPPO \citep{gu2020hippo} or trigonometric positional encoding \citep{Vaswani2017AttentionIA} are not readily compatible with the spike format used in SNNs.
Moreover, even the most state-of-the-art ANNs still lag significantly behind human capabilities in many tasks \citep{zador2019critique,mitchell2021ai}.
Therefore, to enhance the functionality of SNNs, one promising approach is to draw further inspiration from biological neural mechanisms.  In this regard, we propose the analogy of central pattern generators (CPGs) \citep{marder2001central}, a kind of neural circuit well-known in neuroscience, with positional encoding (PE), a technique extensively utilized in deep learning.
This analogy is designed to operate within the SNN framework, potentially bridging the gap between biologically inspired models and modern deep learning techniques.

In neuroscience, a CPG (See \Cref{fig:method_cpgpe} for an illustration) is a group of neurons capable of producing rhythmic patterned outputs without requiring rhythmic inputs \citep{marder1996principles, grillner2006biological}.
These neural circuits are found in the spinal cord and brainstem and are responsible for generating the rhythmic signals that control vital activities such as locomotion, respiration, and chewing \citep{kiehn2016decoding}.

On the other hand, PE is an important technique for ANNs, particularly within models tailored for sequence processing task \citep{Vaswani2017AttentionIA,liu2022petr,dosovitskiy2021an}.
By endowing each element of the input sequence with positional information, typically achieved through diverse mathematical formulations or learnable embeddings, neural networks acquire the capability to discern the order and relative positions of the elements within the sequence.

We argue that these two concepts, despite seemingly unrelated, can be connected profoundly. Intuitively, CPG and PE both generate periodic outputs (with respect to time for CPG and with respective to position for PE).
Moreover, in this paper, we reveal a deeper relationship between these two concepts by showing that \textbf{the widely used sinusoidal PE is mathematically a particular solution of the membrane potential dynamics of a specific CPG}.
However, current SNN architectures exhibit a notable deficiency in implementing an effective and biologically plausible PE mechanism.
Existing so-called positional encoding methods for SNNs \citep{Zhou2022SpikformerWS,yao2023spike} rely on input data, often resulting in non-spike and repetitive outputs for different positions.
Furthermore, incorporating PE techniques designed for ANNs necessitates the calculation of membrane potentials, which is incompatible with the spike format of SNNs.
To address these issues, we draw inspiration from the spiking properties of the CPGs and propose a straightforward yet versatile PE technique for SNNs, termed CPG-PE.
This method encodes positional information with multiple neurons with various patterns of spike trains.
To summarize the highlights of our study:
\begin{itemize}
\vspace{-1.5mm}
\item \textbf{Novel Positional Encoding for SNNs.} We introduce a bio-plausible and effective PE approach tailored specifically for SNNs. This innovative strategy draws inspiration from the central pattern generator found in the human brain. Additionally, we propose a straightforward implementation of CPG-PE in SNNs, which is also compatible with neuromorphic hardware as it can be realized using circuits of leaky integrate-and-fire neurons.

\item \textbf{Consistent Performance Gain.} Our proposed methods significantly and consistently enhance the performance of SNNs across a wide range of sequential tasks, including time-series forecasting and text classification.

\item \textbf{Insightful Analysis.} Our research represents one of the pioneering efforts to comprehensively analyze (1) the mechanism by which SNNs capture positional information and (2) the role of CPGs in the brain. This analysis provides valuable insights into the underlying principles of neural computation.
\end{itemize}

\section{Preliminaries}
\subsection{Spiking Neural Networks}


The basic unit in SNNs is the spiking neuron, such as the leaky integrate-and-fire (LIF) neuron \citep{Maas1997NetworksOS}, which operates based on an input current $I(t)$ and influences the membrane potential $U(t)$ and the spike $S(t)$ at time $t$.
The dynamics of the LIF neuron are described by the following equations:
\begin{align}
\label{equ:membranePotential}
&U(t)=H(t-\Delta t)+I(t), \quad I(t)=f(\mathbf{x}; \mathbf{\theta}), \\
\label{equ:ht}
&H(t)=V_{reset}S(t) +\left(1-S(t)\right)\beta U(t),\\
\label{equ:s(t)}
& S(t)=
\begin{cases}
1, & \text{if  $U(t) \geq$ $U_{\rm thr}$} \\ 
0, & \text{if  $U(t) <$ $U_{\rm thr}$} 
\end{cases},
\end{align}
Here, $I(t)$ is the spatial input to the LIF neuron at time step $t$, calculated using the function $f$ with $\mathbf{x}$ as input and $\mathbf{\theta}$ as learnable parameters.
$\Delta t$ is the discretization constant that determines the granularity of LIF modeling, and $H(t)$ is the temporal output of the neuron at time step $t$.
The spike $S(t)$ is defined as a Heaviside step function based on the membrane potential.
When $U(t)$ reaches the threshold $U_{\rm thr}$, the neuron fires, emitting a spike, and the temporal output $H(t)$ resets to $V_{reset}$.
If the membrane potential $U(t)$ does not reach the threshold, no spike is emitted, and $U(t)$ decays to $H(t)$ at a decay rate of $\beta$.

In this paper, we choose direct training with surrogate gradients as our method to train SNNs.
we follow \cite{SpikingJelly} to choose the arctangent-like surrogate gradients as our error estimation function when backpropagation, which regards the Heaviside step function as:
$S(t) \approx \frac{1}{\pi} \arctan(\frac{\pi}{2}\alpha U(t))+\frac{1}{2}$
, where $\alpha$ is a hyper-parameter to control the frequency of the arctangent function.
Therefore, the gradients of $S$ are 
$\frac{\partial S(t)}{\partial U(t)}=\frac{\alpha}{2} \frac{1}{(1+(\frac{\pi}{2}\alpha U(t))^{2})}$
and thus the overall model can be trained in an end-to-end manner with back-propagation through time (BPTT)  \citep{Werbos1990BackpropagationTT}.

\begin{figure*}[t]
\centering
\includegraphics[width=0.99 \textwidth]{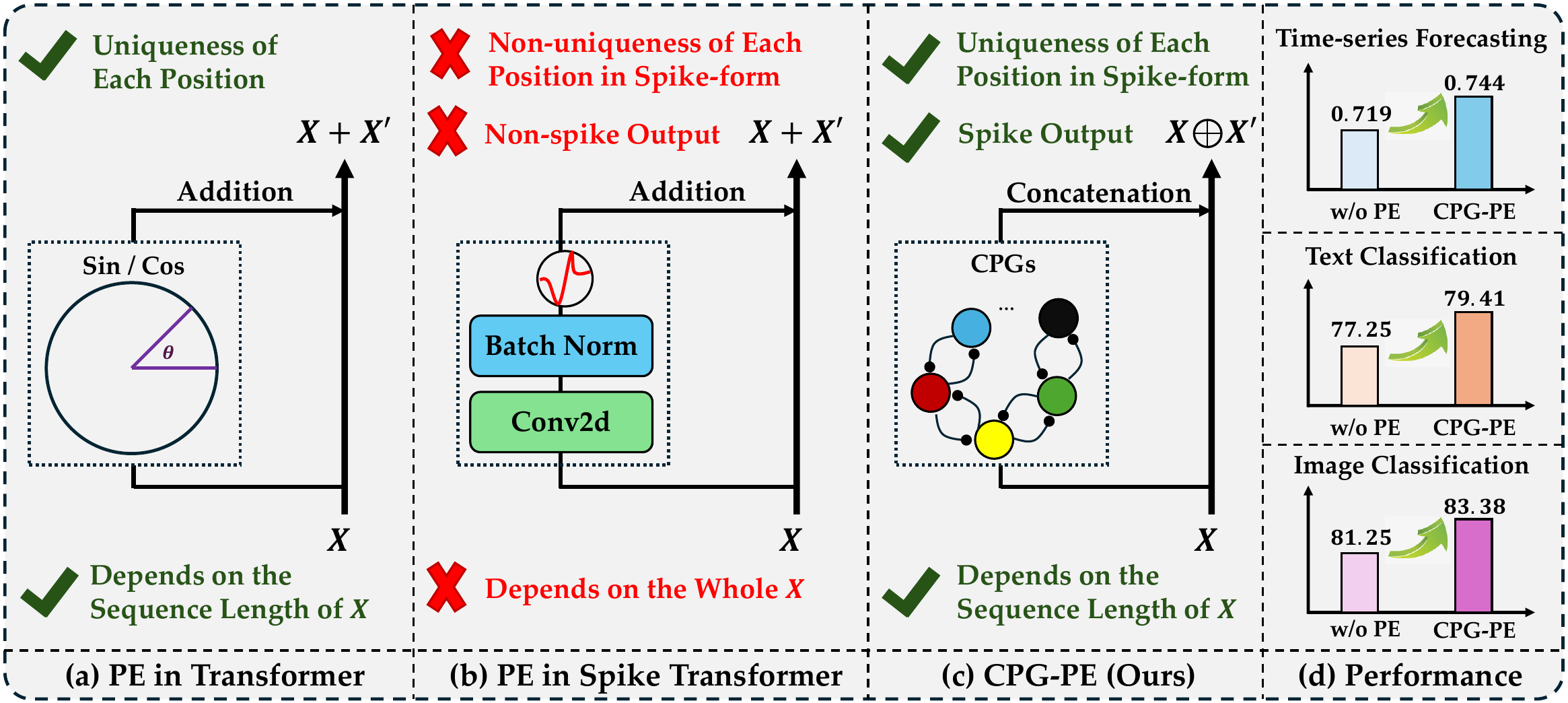}
\caption{\label{fig:intro}
(a) Positional encoding (PE) in ANN Transformers.
(b) Relative PE \protect\footnotemark[3] in Spike Transformers \citep{Zhou2022SpikformerWS,yao2023spike,yao2024spikedriven}.
(c) Our Proposed CPG-PE method.
(d) CPG-PE consistently improves learning performance across various tasks.
CPG-PE is an ideal PE method tailored for SNNs, detailed in \Cref{sec:method}.
}
\vspace{-3mm}
\end{figure*}

\subsection{Positional Encoding}\label{sec:pe}
In the field of sequential tasks, PE is crucial for models like Transformers to understand the sequential order of input tokens.
Absolute PE and relative PE are two prominent methods used to incorporate positional information into these models.
Absolute PE \citep{Vaswani2017AttentionIA} assigns fixed embeddings to each position in the input sequence using trigonometric functions like sine and cosine.
These embeddings are based solely on the position index and are not influenced by the token content, which are predefined and are generated as follows:
\begin{align}
\label{equ:float_pe}
\operatorname{PE}_{(pos, 2i)}=\sin \left(\frac{pos}{10000^{2i/d}}\right),\quad
\operatorname{PE}_{(pos, 2i+1)}=\cos \left(\frac{pos}{10000^{2i/d}}\right)
\end{align}
Here, $pos$ is the position and $d$ is the dimension.
In contrast, relative PE \citep{shaw-etal-2018-self,shiv2019novel,raffel2020exploring} captures the relationships between tokens by considering their relative distances.
This dynamic approach allows models to learn position-specific patterns and dependencies, which is beneficial for tasks requiring different sequence lengths or hierarchical structures.
\footnotetext[3]{Note that this is not a real relative PE. This term is adopted from the original papers.}

However, existing SNN architectures reveal a notable deficiency in the integration of an effective and biologically plausible PE mechanism.
As shown in \Cref{fig:intro}, current Transformer-based SNNs \citep{Zhou2022SpikformerWS,yao2023spike} are primarily tailored for image classification and predominantly rely on a convolutional layer to capture the relative positional information of image patches.
However, this approach resembles more of a spike-element-wise (SEW) residual connection \citep{Fang2021DeepRL} rather than a classic PE module, as it does not ensure that each image patch has a unique spike-form positional representation.
Furthermore, the addition between positional spikes and the original input spikes within these models may yield hardware-unfriendly non-binary integers (i.e., neither $0$ nor $1$), resulting from the addition of ``$1$'' and ``$1$''.
Additionally, our investigation reveals that even SNNs designed for sequential tasks, such as SpikeBERT \citep{Lv2023SpikeBERTAL,bal2024spikingbert}, SpikeGPT \citep{zhu2023spikegpt}, and SpikeTCN \citep{lv2024efficient}, also exhibit a notable absence of an effective spike-form PE mechanism for capturing positional information.

We think that an effective PE strategy should possess the following characteristics: \textbf{uniqueness of each position} and the \textbf{capacity to capture positional information from the input data}.
Furthermore, an optimal PE designed for SNNs should be \textbf{hardware-friendly} and \textbf{in spike-form}.

\subsection{Central Pattern Generators}
Central Pattern Generators (CPGs) are neural networks capable of producing rhythmic patterned outputs without sensory feedback \citep{marder2001central, grillner2006biological}.
These networks are fundamental for understanding motor control in vertebrates and invertebrates and are often applied to robotics and neural control systems.
Mathematically, CPGs can be modeled using systems of coupled nonlinear oscillators, and the general form can be written as:
\begin{align}
\dot{\mathbf{x}}  = \mathbf{F}(\mathbf{x}) + \mathbf{G}(\mathbf{x}, \mathbf{y}), \quad
\dot{\mathbf{y}}  = \mathbf{H}(\mathbf{y}) + \mathbf{K}(\mathbf{x}, \mathbf{y})
\end{align}
where \( \mathbf{x} \) and \( \mathbf{y} \) are the state variables (can be seen as membrane potential) of two coupled oscillators, \( \mathbf{F} \) and \( \mathbf{H} \) are intrinsic dynamics of the oscillators, and \( \mathbf{G} \) and \( \mathbf{K} \) are the coupling functions.


\section{Methods}\label{sec:method}
In biological systems, CPGs as well as other neurons do not transmit information directly through membrane potential but through spikes.
A burst of spikes will be generated only when the membrane potential of a CPG exceeds a certain threshold.
Therefore, we introduced the Heaviside step function in SNN, selecting only the part that exceeds the threshold, to design the CPG-PE.
In this section, we will first reveal the relationship between CPGs and PE.
Then we will introduce our proposed CPG-PE and its implementations.

\subsection{Relationship between Central Pattern Generators and Positional Encoding}
Consider one of the simplest CPGs with the following assumptions:
\begin{enumerate}
\item The CPG is a coupled nonlinear oscillator with 2 neurons whose states are represented as $\mathbf{x}(t)$ and $\mathbf{y}(t)$. 
\item Both neurons are autonomic neurons and will gain membrane voltage with constant speed, i.e., $\mathbf{F}(\mathbf{x})=b>0, \mathbf{H}(\mathbf{y})=d>0$.
\item Neuron represented by $\mathbf{x}$ will inhibits $\mathbf{y}$ while $\mathbf{y}$ excites $\mathbf{x}$. And the influence is proportional to the other neuron's state. Formally, $\mathbf{G}(\mathbf{x}, \mathbf{y})=a\mathbf{y}, \mathbf{K}(\mathbf{x}, \mathbf{y})=-c\mathbf{x}$ where $a > 0, c > 0$.
\end{enumerate}
Now the coupled oscillators can be represented as:
\begin{align}
    \dot{\mathbf{x}}(t) = a\mathbf{y}(t) + b, \quad
    \dot{\mathbf{y}}(t) = -c\mathbf{x}(t) + d
\end{align}

The general solution of this differential equation system is:
\begin{align}
\mathbf{x}(t)&=k_{1} \cos (\sqrt{ac}~t) + k_{2}\sqrt{\frac{a}{c}} \sin (\sqrt{ac}~t)+\frac{d}{c} \label{equ:solution_raw1}\\
\mathbf{y}(t)&=-k_{1} \sqrt{\frac{c}{a}}  \sin (\sqrt{ac}~t)+k_{2} \cos (\sqrt{ac}~t)-\frac{b}{a} \label{equ:solution_raw2}
\end{align}
where $k_{1}$ and $k_{2}$ are arbitrary constants. To simplify, we can further re-parameterize $t$ with $t'= t + \arctan(k_1 / ak_2)$ as is to choose another start point, then we can rewrite \Cref{equ:solution_raw1,equ:solution_raw2} as:
\begin{align}
\label{equ:x(t)}
    \mathbf{x}(t')=\sqrt{k_1^2+\frac{a}{c}k_2^2}\sin(\sqrt{ac}~t')+\frac{d}{c} =A_{1}\sin(w_{1}t')+b_1, \\
\label{equ:y(t)}
    \mathbf{y}(t')=\sqrt{\frac{c}{a}k_1^2+k_2^2}\cos(\sqrt{ac}~t')-\frac{b}{a}= A_{2}\cos(w_{2}t')+b_2
\end{align}

Comparing \Cref{equ:x(t),equ:y(t)} and \Cref{equ:float_pe}, we are astonished to find that \textbf{the PE in Transformers \citep{Vaswani2017AttentionIA} is a particular solution of the membrane potential variations in a specific type of CPG} with properly chosen $a, b, c, d$.
This finding suggests that the use of sinusoidal PE in Transformers is actually a bio-plausible choice that could possibly advance the model's ability to learn indexing and periodic information.

\begin{figure*}[htp]
\centering
\subfigure[]{
\centering
\includegraphics[width=0.47 \textwidth]{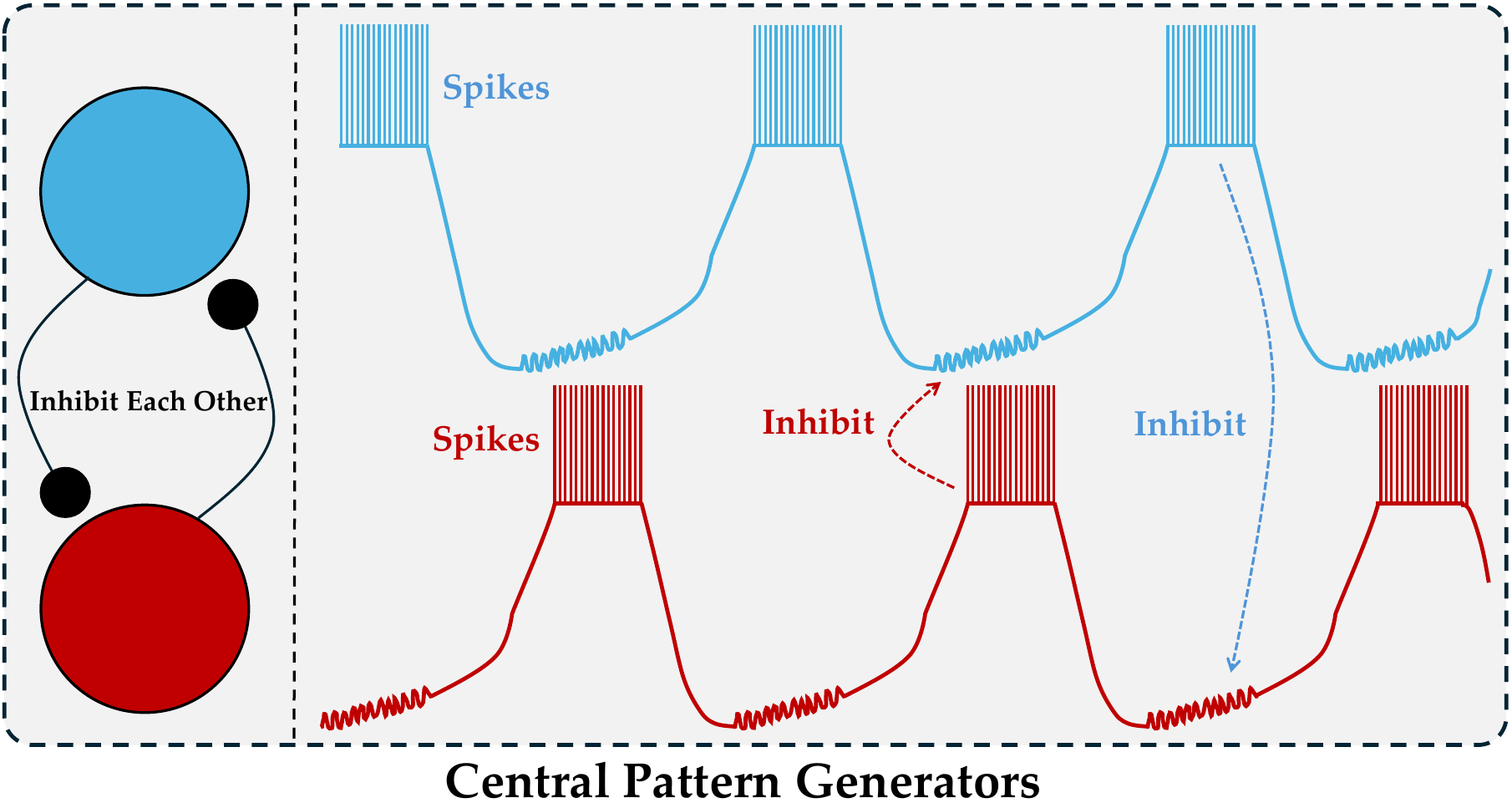}
}
\subfigure[]{
\centering
\includegraphics[width=0.49 \textwidth]{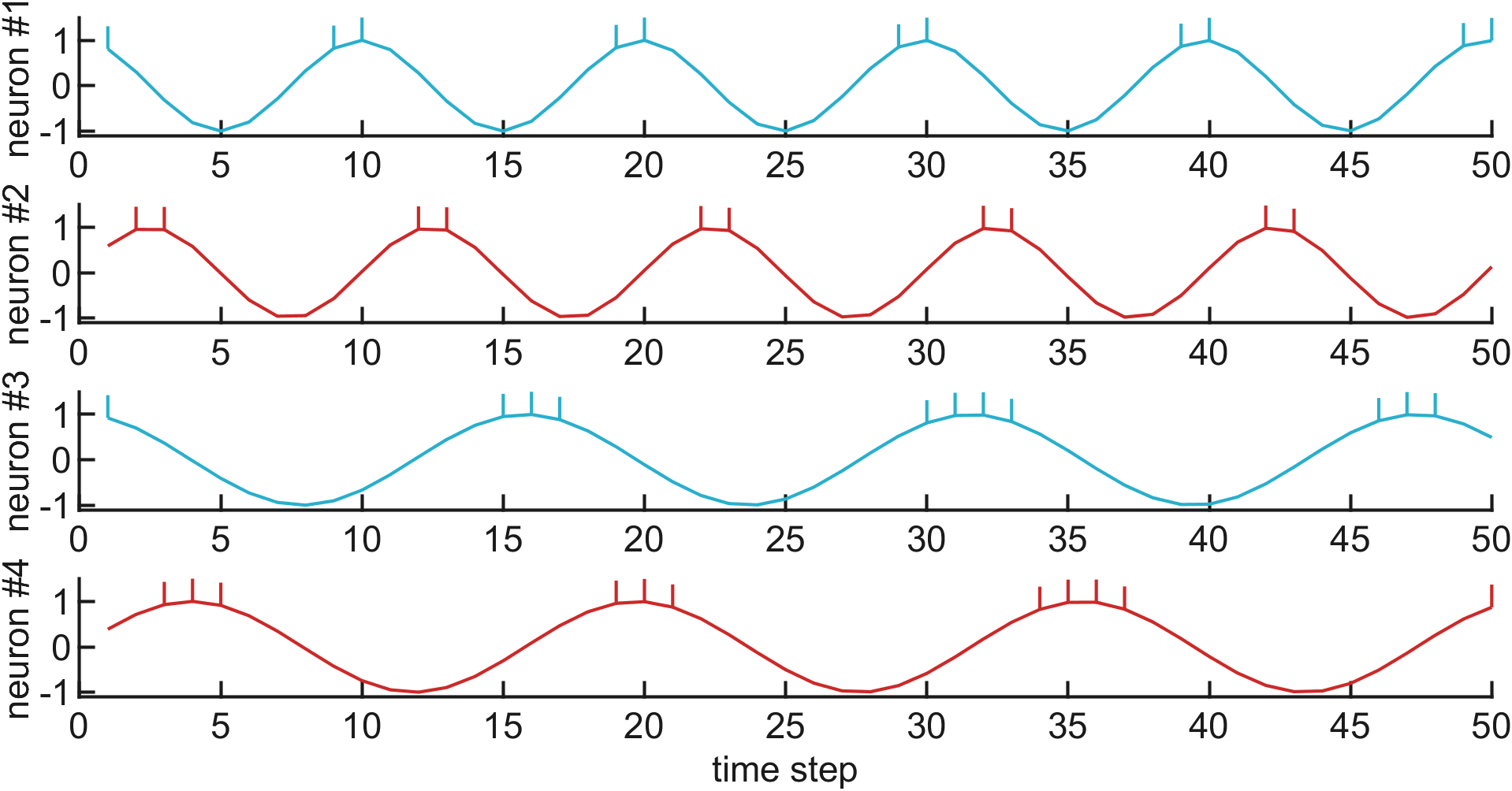}
}
\vspace{-3mm}
\caption{\label{fig:method_cpgpe}
(a) Illustration of a pair of CPG neurons demonstrating mutual inhibition through spiking activity. The spikes represent neural spikes that inhibit each other, exemplifying the coordination mechanism in CPG networks.
(b) Spike trains of the first $4$ CPG neurons. The curve represents the membrane potential, while the vertical lines represent spikes. 
}
\vspace{-3mm}
\end{figure*}

\subsection{CPG-based Positional Encoding}

Consider a system with $N$ pairs of CPG neurons, resulting in a total of $2N$ cells.
Then for $i=1, 2, ..., N$, the equations governing the CPG-PE are as follows:
\begin{align}
\label{equ:cpg_pe_sin}
\operatorname{CPG-PE}^{2i-1}(t)& =H\left(\cos\left(\eta\frac{t}{\tau^{\frac{i}{N}}}\right) - v^{\text{thres}} \right), \\
\label{equ:cpg_pe_cos}
\operatorname{CPG-PE}^{2i}(t)& =H\left(\sin\left(\eta\frac{t}{\tau^{\frac{i}{N}}}\right)- v^{\text{thres}} \right), 
\end{align}
where $\eta$ is a constant to control the period, $\tau$ represents the base period, and $v^{\text{thres}}$ denotes the membrane potential threshold.
Note that this threshold is different from the $U_{thr}$ of spike neurons described in \Cref{equ:s(t)}.
The Heaviside step function $H$ reflects a spike when the membrane potential exceeds the threshold.

It is important to clarify that the $t$ in \Cref{equ:cpg_pe_sin} and \ref{equ:cpg_pe_cos} is neither the time step in SNNs nor the position index.
Suppose the input spike matrix $X \in \{0,1\}^{T \times B \times L \times D}$, where $T$ is the time step in SNNs, $B$ is the batch size, $L$ is the sequence length of the input sample, $D$ is the feature dimension.
To ensure the uniqueness of each position at every time step, we flatten the dimensions $T$ and $L$ into a new dimension $T \times L$.
Therefore, $t$ ranges from $0$ to $T \times L$.
Notably, the entire CPG-PE operates in spike-form and is parameter-free.
To better understand CPG-PE, we draw a simple approximation of the resulting CPG spiking patterns under the assumption of a sequence length of \( L = 128 \) and \( N = 20 \) pairs of CPG neurons, illustrated in \Cref{fig:method_cpgpe} (b).

\subsection{Implementations}\label{sec:method_implem}

\begin{figure*}[htp]
\centering
\includegraphics[width=0.93 \textwidth]{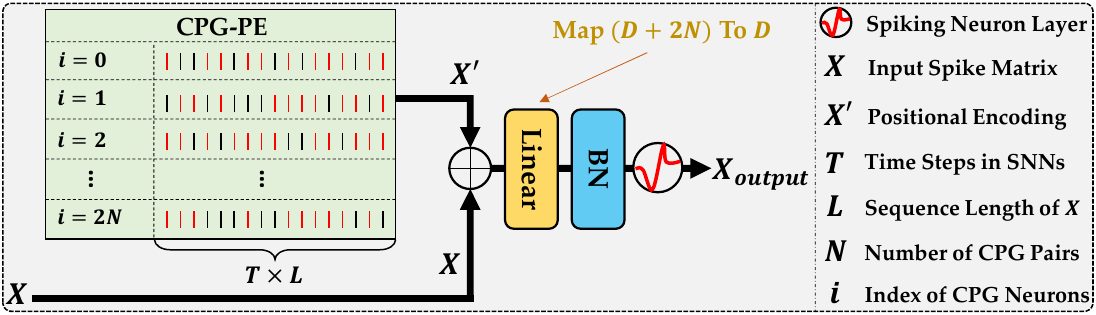}
\caption{\label{fig:implem}
Illustration of applying CPG-PE to SNNs. $X$, $X'$, and $X_{output}$ are all spike matrices.
}
\vspace{-3mm}
\end{figure*}

We design a simple implementation to apply CPG-PE to SNNs in a pluggable and hardware-friendly manner, shown in \Cref{fig:implem}.
Before diving into the details, we want to emphasize that the data transmitted in SNNs should always be in spike-form.
Therefore, the direct addition operation between two spike matrices, as used in \citep{Zhou2022SpikformerWS,yao2023spike}, should be forbidden.

Initially, CPG-PE encodes the positional information of the input spike matrix $X$, resulting in $X'$.
Then, to maintain binary values and avoid introducing non-binary elements, we opt to \textbf{concatenate} $X$ and $X'$ along the feature dimension.
Lastly, a linear layer is employed to map the feature dimension from $D+2N$ back to $D$, where $D$ is the feature dimension of $X$, and $N$ is the number of CPG pairs.
This effectively neutralizes the dimensional increase caused by concatenation.
The whole process can be formalized as follows:
\begin{align}
& X' = \operatorname{CPG-PE}(X), && X \in \{0, 1\}^{T \times B \times L \times D}, X' \in \{0, 1\}^{T \times B \times L \times 2N}\\
& X_{1} = X \oplus X', && X_{1} \in \{0, 1\}^{T \times B \times L \times (D+2N)}\\
& X_{output} = \mathcal{SN}\left(\operatorname{BN}\left(\operatorname{Linear}\left(X_{1}\right)\right)\right), && X_{output} \in \{0, 1\}^{T \times B \times L \times D}
\end{align}
where $\operatorname{BN}$ represents batch normalization and $\mathcal{SN}$ is a spike neuron layer.
Furthermore, CPG-PE necessitates that input samples be sequential data, making it directly applicable to time series data and natural language.
For image data, however, an adaptation is required:
images must be segmented into patches similar to the approach used in the Vision Transformer \citep{dosovitskiy2021an}.
Considering the compatibility with neuromorphic hardware, we also (1) implement CPG-PE with LIF neurons, and (2) integrate CPG-PE into a classic linear layer.
Please refer to \Cref{app:cpg_pe_with_lif,app:cpg_linear} for details.

\section{Experiments}

In this section, we conduct experiments to investigate the following research questions: \\
\textbf{RQ1}: Is our design of CPG-PE strategy effective and robust in sequential tasks? \\
\textbf{RQ2}: Can CPG-PE work well on image patches that have no inherent order? \\
\textbf{RQ3}: How will CPG's inner properties influence CPG-PE? \\
\textbf{RQ4}: Does our CPG-PE satisfy the requirements of a good PE tailored for SNNs? 

\subsection{Datasets}

To assess the PE capabilities of the compared models and answer \textbf{RQ1}, we conduct two sequential tasks: \textbf{time-series forecasting}, and \textbf{text classification}.
Following \citep{lv2024efficient}, we choose $4$ real-world datasets for time-series forecasting:
Metr-la \citep{li2017diffusion}: This dataset contains the average traffic speed data collected from the highways in Los Angeles County.
Pems-bay \citep{li2017diffusion}: It consists of average traffic speed data from the Bay Area.
Electricity \citep{lai2018modeling}: This dataset captures hourly electricity consumption measured in kilowatt-hours (kWh).
Solar \citep{lai2018modeling}: It includes data on solar power production.
For text classification, we follow \citep{Lv2023SpikeBERTAL} to conduct experiments on $6$ benchmarks including: Movie Reviews \cite{Pang2005SeeingSE}, SST-$2$ \cite{Socher2013RecursiveDM}, SST-$5$, Subj, ChnSenti, and Waimai.
In addition, to answer \textbf{RQ2}, we also conduct \textbf{image classification} experiments on $1$ static datasets CIFAR and $1$ neuromorphic datasets CIFAR10-DVS \citep{Li2017CIFAR10DVSAE}.
The dataset details and metrics are provided in \Cref{app:datasets}.

\subsection{Time-Series Forecasting}

As discussed in \Cref{sec:method_implem}, our proposed CPG-PE can be seamlessly integrated into any SNN capable of sequence processing.
Consequently, we applied CPG-PE to the SNN counterparts of Temporal Convolutional Networks (TCN) \citep{bai2018empirical}, Recurrent Neural Networks (RNN) to assess the efficacy of our method in enabling SNNs to capture positional information.
The results for TCN, SpikeTCN w/o PE, RNN, and Spike-RNN w/o PE are sourced from the previous study by \citep{lv2024efficient}.
In addition, we deliberately conducted experiments on PE in Spikformer to explore whether our specially designed CPG-PE is truly more suitable for SNNs than all previous PEs.
Notably, we also investigated the modularization of CPG, i.e., replacing all Linear layers with CPG-Linear layers (See \Cref{app:cpg_linear}), and its impact on the Spikformer model for time-series forecasting, i.e., Spikformer w/ CPG-Full.
We report the results on $4$ time-series forecasting benchmarks with various prediction lengths in \Cref{tab:tsf_table}.
We also list results from ANNs for reference.

\begin{table*}[]
\centering
\caption{
\label{tab:tsf_table}
Experimental results of time-series forecasting on $4$ benchmarks with various prediction lengths $6,24,48,96$.
``PE'' stands for positional encoding.
``w/o'' denotes ``without'' while ``w/'' denotes ``with''.
The best results of \textbf{SNNs} are formatted in \textbf{bold font format}.
$\uparrow$ ($\downarrow$) indicates the higher (lower) the better.
Shaded ones are ours.
All results are averaged across $3$ random seeds.
}
\resizebox{\linewidth}{!}{
\begin{tabular}{l:cc:r:cccc:cccc:cccc:cccc:c}
\toprule
\cline{1-21}
\multirow{2}{*}{Model} & \multirow{2}{*}{SNN} & \multirow{2}{*}{Spike PE} & \multirow{2}{*}{Metric} & \multicolumn{4}{c:}{\bf Metr-la} & \multicolumn{4}{c:}{\bf Pems-bay} & \multicolumn{4}{c:}{\bf Solar} & \multicolumn{4}{c:}{\bf Electricity} & \multirow{2}{*}{\textbf{Avg.}}\\
\cline{5-20}
& & & & $6$ & $24$ & $48$ & $96$ & $6$ & $24$ & $48$ & $96$& $6$ & $24$ & $48$ & $96$& $6$ & $24$ & $48$ & $96$ \\

\hline \hline

\multirow{2}{*}{TCN (ANN)}  & \multirow{2}{*}{\xmark}  & \multirow{2}{*}{--} & R$^2$$\uparrow$ & $.820$ & $.601$ & $.455$ & $.330$ & $.881$ & $.749$ & $.695$ & $.689$ & $.958$& $.871$ & $.737$ & $.661$ & $.975$  & $.973$ & $.968$ & $.962$ & $.770$\\
& & & RSE$\downarrow$ & $.446$ & $.665$ & $.778$ & $.851$ & $.373$ & $.541$ & $.583$ & $.587$ & $.210$& $.359$ & $.513$ & $.583$ & $.282$  & $.287$ & $.319$ & $.345$ & $.483$\\ 
\cline{1-21}

\multirow{2}{*}{SpikeTCN w/o PE \citep{lv2024efficient}} & \color{red}\multirow{2}{*}{\cmark} & \multirow{2}{*}{--} & R$^2$$\uparrow$ &  $.783$ & $\bm{.603}$ & $\bm{.468}$ & $.326$ & $.811$ & $.729$ & $.662$ & $.633$ & $.937$& $.840$ & $.708$ & $.650$ & $.970$  & $\bm{.963}$ & $.958$ & $.953$ & $.750$\\
& & & RSE$\downarrow$ & $.491$ & $.665$ & $\bm{.769}$ & $.865$ & $.469$ & $\bm{.541}$ & $.625$ & $.635$ & $.259$ & $.401$ & $.541$ & $\bm{.596}$ & $.333$  & $\bm{.342}$ & $\bm{.368}$ & $.389$ & $.518$\\ 
\cline{1-21}

 \rowcolor{cpgcolor} &  &  & R$^2$$\uparrow$ &  $\bm{.802}$ & $\bm{.603}$ & $.467$ & $\bm{.337}$ & $\bm{.839}$ & $\bm{.737}$ & $\bm{.684}$ & $\bm{.656}$ & $\bm{.951}$ & $\bm{.861}$ & $\bm{.729}$ & $\bm{.651}$ & $\bm{.974}$ & $.960$ & $\bm{.959}$ & $\bm{.956}$ & $\bm{.760}$\\
 \rowcolor{cpgcolor}\multirow{-2}{*}{SpikeTCN w/ CPG-PE} & \color{red}\multirow{-2}{*}{\cmark} & \color{red}\multirow{-2}{*}{\cmark} & RSE$\downarrow$ & $\bm{.469}$ & $\bm{.664}$ & $.770$ & $\bm{.859}$ & $\bm{.433}$ & $.555$ & $\bm{.604}$ & $\bm{.632}$ & $\bm{.222}$ & $\bm{.373}$ & $\bm{.521}$ & $.606$ & $\bm{.278}$ & $.380$ & $.374$ & $\bm{.370}$ & $\bm{.506}$ \\

\hline \hline

\multirow{2}{*}{RNN (ANN)} & \multirow{2}{*}{\xmark} & \multirow{2}{*}{\xmark} & R$^2$$\uparrow$ & $.844$ & $.600$ & $.442$ & $.307$ & $.870$ & $.775$ & $.690$ & $.683$ & $.959$ & $.830$ & $.810$ & $.718$ & $.978$ & $.972$ & $.971$ & $.964$ & $.776$ \\
& & & RSE$\downarrow$ & $.414$ & $.668$ & $.781$ & $.897$ & $.390$ & $.511$ & $.578$ & $.609$ & $.208$ & $.413$ & $.438$ & $.549$ & $.273$ & $.295$ & $.299$ & $.316$ & $.477$ \\
\cline{1-21}

\multirow{2}{*}{SpikeRNN w/o-PE  \citep{lv2024efficient}} & \color{red}\multirow{2}{*}{\cmark} & \multirow{2}{*}{--} & R$^2$$\uparrow$ & $\bm{.846}$ & $\bm{.622}$ & $.433$ & $.283$ & $.872$ & $.745$ & $.685$ & $.654$ & $.923$ & $.820$ & $\bm{.812}$ & $.714$ & $\bm{.977}$ & $\bm{.972}$ & $.962$ & $\bm{.960}$ & $.768$\\
& & & RSE$\downarrow$ & $\bm{.412}$ & $.648$ & $.794$ & $.935$ & $.387$ & $.528$ & $.588$ & $.634$ & $.278$& $.425$ & $\bm{.435}$ & $.586$ & $.267$  & $.296$ & $.346$ & $.481$ & $.503$\\
\cline{1-21}

\rowcolor{cpgcolor} & &  & R$^2$$\uparrow$ & $.844$ & $.621$ & $\bm{.438}$ & $\bm{.306}$ & $\bm{.874}$ & $\bm{.763}$ & $\bm{.688}$ & $\bm{.667}$ & $\bm{.934}$ & $\bm{.833}$ & $.811$ & $\bm{.724}$ & $\bm{.977}$ & $\bm{.972}$ & $\bm{.966}$ & $.958$ & $\bm{.773}$\\
\rowcolor{cpgcolor} \multirow{-2}{*}{SpikeRNN w/ CPG-PE}& \color{red}\multirow{-2}{*}{\cmark}  & \color{red}\multirow{-2}{*}{\cmark}& RSE$\downarrow$ & $.416$ & $\bm{.645}$ & $\bm{.782}$ & $\bm{.878}$ & $\bm{.380}$ & $\bm{.523}$ & $\bm{.579}$ & $\bm{.621}$ & $\bm{.264}$ & $\bm{.419}$ & $\bm{.435}$ & $\bm{.544}$ & $\bm{.265}$ & $\bm{.294}$ & $\bm{.315}$ & $\bm{.366}$ & $\bm{.482}$ \\
\hline \hline

\multirow{2}{*}{Transformer (ANN)} & \multirow{2}{*}{\xmark} & \multirow{2}{*}{\xmark} & R$^2$$\uparrow$ & $.727$ & $.554$ & $.413$ & $.284$ & $.785$ & $.734$ & $.688$ & $.673$ & $.953$ & $.858$ & $.759$ & $.718$ & $.978$ & $.975$ & $.972$ & $.964$ & $.752$\\
& & & RSE$\downarrow$ & $.551$ & $.704$ & $.808$ & $.895$ & $.502$ & $.558$ & $.610$ & $.618$ & $.223$ & $.377$ & $.504$ & $.545$ & $.260$ & $.277$ & $.347$ & $.425$ & $.512$ \\
\cline{1-21}




\multirow{2}{*}{Spikformer w/o PE} & \color{red}\multirow{2}{*}{\cmark} & \multirow{2}{*}{--} & R$^2$$\uparrow$ & $.697$ & $.491$ & $.383$ & $.242$ & $.768$ & $.684$ & $.678$ & $.663$ & $.903$ & $.819$ & $.715$ & $.656$ & $.956$ & $.955$ & $.953$ & $.943$ & $.719$\\
& & & RSE$\downarrow$ & $.581$ & $.753$ & $.828$ & $.917$ & $.521$ & $.607$ & $.613$ & $.627$ & $.319$ & $.439$ & $.548$ & $.602$ & $.371$ & $.375$ & $.386$ & $.450$ & $.559$\\
\cline{1-21}

\multirow{2}{*}{Spikformer w/ RPE \citep{Zhou2022SpikformerWS}} & \color{red}\multirow{2}{*}{\cmark} & \color{red}\multirow{2}{*}{\cmark} & R$^2$$\uparrow$ & $.713$ & $.527$ & $.399$ & $.267$ & $.773$ & $.697$ & $.686$ & $.667$ & $.929$ & $.828$ & $.744$ & $.674$ & $.959$ & $.955$ & $.955$ & $.954$ & $.733$\\
& & & RSE$\downarrow$ & $.565$ & $.725$ & $.818$ & $.903$ & $.514$ & $.594$ & $.606$ & $.621$ & $.272$ & $.426$ & $.519$ & $.586$ & $.373$ & $.371$ & $.379$ & $.382$ & $.541$\\
\cline{1-21}

 & \color{red}\multirow{2}{*}{\cmark}  & \multirow{2}{*}{\xmark} & R$^2$$\uparrow$ & $.699$ & $.502$ & $.409$ & $.255$ & $.762$ & $.704$ & $.687$ & $.666$ & $.934$ & $.834$ & $.752$ & $.699$ & $.970$ & $.967$ & $.960$ & $.957$ & $.734$\\
\multirow{-2}{*}{Spikformer w/ Float-PE}& & & RSE$\downarrow$ & $.578$ & $.744$ & $.810$ & $.912$ & $.527$ & $.588$ & $.605$ & $.623$ & $.264$ & $.418$ & $.512$ & $.563$ & $.307$ & $.322$ & $.356$ & $.362$ & $.531$ \\
\cline{1-21}

\rowcolor{cpgcolor} &  & & R$^2$$\uparrow$ &$\bm{.726}$ & $.526$ & $\bm{.419}$ & $\bm{.287}$ & $\bm{.780}$ & $.712$ & $\bm{.690}$ & $.666$ & $\bm{.937}$ & $.833$ & $\bm{.757}$ & $.707$ & $\bm{.972}$ & $.970$ & $.966$ & $.960$ & $\bm{.744}$\\
\rowcolor{cpgcolor}\multirow{-2}{*}{Spikformer w/ CPG-PE}& \color{red}\multirow{-2}{*}{\cmark} & \color{red}\multirow{-2}{*}{\cmark} & RSE$\downarrow$ & $\bm{.553}$ & $.720$ & $\bm{.806}$ & $\bm{.890}$ & $.508$ & $.580$ & $\bm{.602}$ & $.622$ & $\bm{.257}$ & $.420$ & $\bm{.506}$ & $.555$ & $\bm{.299}$ & $.310$ & $.314$ & $\bm{.355}$ & $\bm{.519}$\\
\cline{1-21}

\rowcolor{cpgcolor} &  &  & R$^2$$\uparrow$ & $.719$ & $\bm{.530}$ & $.417$ & $.286$ & $.779$ & $\bm{.714}$ & $.689$ & $\bm{.668}$ & $.936$ & $\bm{.835}$ & $\bm{.757}$ & $\bm{.709}$ & $.971$ & $\bm{.971}$ & $\bm{.968}$ & $\bm{.962}$ & $\bm{.744}$\\
\rowcolor{cpgcolor}\multirow{-2}{*}{Spikformer w/ CPG-Full}& \color{red}\multirow{-2}{*}{\cmark} & \color{red}\multirow{-2}{*}{\cmark} & RSE$\downarrow$ & $.560$ & $\bm{.719}$ & $.807$ & $.893$ & $\bm{.507}$ & $\bm{.577}$ & $.605$ & $\bm{.620}$ & $.260$ & $\bm{.417}$ & $.508$ & $\bm{.548}$ & $.304$ & $\bm{.308}$ & $\bm{.311}$ & $.439$ & $.523$ \\
\cline{1-21}
\bottomrule
\end{tabular}
}
\vspace{-3mm}
\end{table*}

In summary, the results presented in Table \ref{tab:tsf_table} indicate that SNNs equipped with the CPG-PE module significantly outperform their counterparts lacking the PE feature.
This finding effectively addresses \textbf{RQ1} from a time-series analysis perspective.
Detailed findings include:

\textbf{(1) CPG-PE enables SNNs to successfully capture positional information}. 
SNNs, including models such as Spike-TCN, Spike-RNN, and Spikformer, when integrated with CPG-PE, show superior performance compared to those without PE. Notably, CPG-PE also reduces the performance disparity between SNNs and traditional ANNs in time-series forecasting tasks, evidenced by an average increase of $0.013$ in R$^2$ and a decrease of $0.022$ in RSE.

\textbf{(2) CPG-PE is the most suitable position encoding strategy for Spikformer}.
In addition to CPG-PE, other encoding strategies such as Float-PE (the original PE in Transformer) and RPE (the original PE in Spikformer) were also evaluated.
The Spikformer equipped with CPG-PE emerged as the top-performing variant, confirming CPG-PE as the most suitable PE strategy for SNNs.

\textbf{(3) CPG-Full module can also effectively model the positional information of time series data}.
The CPG-Full module's performance in modeling positional information of time-series data is comparable to that of CPG-PE, with average R$^2$ values nearly identical to those of Spikformer with CPG-PE and significantly better than those of other models.

\subsection{Text Classification}
In addition to time-series forecasting, natural language processing (NLP) serves as another critical domain to assess the efficacy of the CPG-PE module in encoding positional information.
Following the pioneering work of \citep{Lv2023SpikeBERTAL}, who first employed Spikformer for text classification tasks, we extended this application to $6$ benchmark datasets.
We also include results from fine-tuned BERT for reference.

\begin{table} [ht]
\centering
\begin{center}
\caption{\label{tab:text_table}
Accuracy on $6$ text classification benchmarks.
The best results of SNNs and ANNs are formatted in bold font format.
Experimental results are averaged across $5$ random seeds.
}
\vspace{-3mm}
\resizebox{\linewidth}{!}{
\begin{tabular}{l:cc:c:cccc:cc:c} \hline \hline
\multirow{2}{*}{\textbf{Model}} & \multirow{2}{*}{\textbf{SNN}} & \multirow{2}{*}{\textbf{Spike PE}} & \multirow{2}{*}{\textbf{Param (M)}} & \multicolumn{4}{c:}{\bf English Dataset}  & 
\multicolumn{2}{c:}{\bf Chinese Dataset} & \multirow{2}{*}{\textbf{Avg.}}\\
\cline{5-10}
& & & & \textbf{MR} & \textbf{SST-2} & \textbf{Subj} & \textbf{SST-5} & \textbf{ChnSenti} & \textbf{Waimai}\\
\hline

Fine-tuned BERT \citep{Devlin2019BERTPO} & \xmark & \xmark & $109.8$ & $\bm{87.63}${\scriptsize $\pm 0.18$} & $\bm{92.31}${\scriptsize $\pm 0.17$} & $\bm{95.90}${\scriptsize $\pm 0.16$} & $\bm{50.41}${\scriptsize $\pm 0.13$} & $\bm{89.48}${\scriptsize $\pm 0.16$} & $\bm{90.27}${\scriptsize $\pm 0.13$} & $\bm{84.33}$\\
\hline 



Spikformer w/o PE \citep{Lv2023SpikeBERTAL} & {\color{red}\cmark} & -- & $109.8$ & $75.87${\scriptsize $\pm 0.35$} & $81.71${\scriptsize $\pm 0.31$} & $91.60${\scriptsize $\pm 0.30$} & $41.84${\scriptsize $\pm 0.39$} & $85.62${\scriptsize $\pm 0.25$} & $86.87${\scriptsize $\pm 0.28$} & $77.25$\\

Spikformer w/ Random-PE & {\color{red}\cmark} & {\color{red}\cmark} & $110.4$ & ${75.90}${\scriptsize $\pm 0.42$} & ${81.64}${\scriptsize $\pm 0.31$} & ${91.40}${\scriptsize $\pm 0.35$} & ${41.86}${\scriptsize $\pm 0.41$} & ${85.63}${\scriptsize $\pm 0.29$} & ${86.90}${\scriptsize $\pm 0.30$} & ${77.23}$\\

Spikformer w/ Float-PE & {\color{red}\cmark} & \xmark & $109.8$ & $79.67${\scriptsize $\pm 0.36$} & $82.18${\scriptsize $\pm 0.34$} & $92.20${\scriptsize $\pm 0.31$} & $42.58${\scriptsize $\pm 0.41$} & $85.71${\scriptsize $\pm 0.26$} & $88.34${\scriptsize $\pm 0.32$} & $78.44$\\
\rowcolor{cpgcolor}
Spikformer w/ CPG-PE [Ours] & {\color{red}\cmark} & {\color{red}\cmark} & $110.4$ & $\bm{82.42}${\scriptsize $\pm 0.42$} & $\bm{82.90}${\scriptsize $\pm 0.33$} & $\bm{92.50}${\scriptsize $\pm 0.25$} & $\bm{43.62}${\scriptsize $\pm 0.36$} & $\bm{86.54}${\scriptsize $\pm 0.26$} & $\bm{88.49}${\scriptsize $\pm 0.29$} & $\bm{79.41}$ \\
\hline


\hline\hline
\end{tabular}
}
\end{center}
\vspace{-3mm}
\end{table}

The results presented in \Cref{tab:text_table} shows that Spikformer enhanced with CPG-PE achieves the state-of-the-art performance across $6$ benchmarks, effectively addressing \textbf{RQ1}.
Meanwhile, we conducted a set of ablation experiments to eliminate the effects of increased parameter counts on model performance.
Specifically, we replaced the spike-form positional encoding matrix obtained from CPG with a randomly generated spike matrix (See ``Spikformer w/ Random PE'' Row).
By comparing these two configurations, we confirmed the effectiveness of our proposed CPG-PE.


\subsection{Image Classification}
In this section, we aim to answer \textbf{RQ2}.
To adapt the CPG-PE for image classification, it is essential to conceptualize the array of image patches as sequential data.
Consequently, some SNN models that do not incorporate a concept of ``sequence length'' in their spike matrices, such as SEW-Resnet \citep{Fang2021DeepRL}, are incompatible with the integration of a CPG-PE module.
Therefore, we only consider ViT-liked SNN, i.e. Spikformer, in this experiment.
We also include results from ViTs for reference.

\begin{table}[ht]
\caption{
\label{tab:image_classification}
Evaluation on image classification benchmarks.
Float-PE denotes the original PE of the Transformer, while RPE denotes the original PE of the Spikformer.
Numbers with $^*$ denote our implementation.
The best results of SNNs and ANNs are formatted in bold font format.
All results are averaged across $4$ random seeds.
}
\centering
\vspace{-3mm}
\resizebox{\linewidth}{!}{
\begin{tabular}{l:cc:cc:cc:cc:c}
\toprule
\multirow{2}{*}{\textbf{Model}} & \multirow{2}{*}{\textbf{SNN}} & \multirow{2}{*}{\textbf{Spike PE}} & \multicolumn{2}{c:}{\bf CIFAR10}  & \multicolumn{2}{c:}{\bf CIFAR10-DVS}  & \multicolumn{2}{c:}{\bf CIFAR100}  & \multirow{2}{*}{\textbf{Avg.}} \\
\cline{4-9}
& & & Param (M) & Accuracy & Param (M) & Accuracy & Param (M) & Accuracy & \\
\hline

Vision-Transformer \citep{dosovitskiy2021an} & \xmark & \xmark & $9.32$ & $\bm{96.73}$ & -- & -- & $9.36$ & $\bm{81.02}$ & --\\

\hline

Spikformer w/o PE & {\color{red} \cmark} & -- & $8.00$ & $93.77$ & $1.99$ & $76.40$ & $8.04$ & $73.59$ & $81.25$\\

Spikformer w/ Random-PE & {\color{red} \cmark} & {\color{red} \cmark} & $8.17$ &  $93.85$ & $2.06$ & $76.44$ & $8.20$ & $73.54$ & $81.27$\\

Spikformer w/ Float-PE & {\color{red} \cmark} & \xmark & $8.00$ & $94.42$ & $1.99$ & $77.60$ & $8.04$ & $74.73$ & $82.25$\\

Spikformer w/ RPE \citep{Zhou2022SpikformerWS} & {\color{red} \cmark} & {\color{red} \cmark} & $9.33$ & $\;\,94.64^*$ & $2.57$ & $\;\,77.95^*$ & $9.37$ & $\;\,76.78^*$ & $83.12$\\
\rowcolor{cpgcolor}
Spikformer w/ CPG-PE [Ours] & {\color{red} \cmark} & {\color{red} \cmark}  & $8.17$ & $\bm{94.82}$ & $2.06$ & $\bm{78.06}$ & $8.20$ & $\bm{77.27}$ & $\bm{83.38}$\\
\rowcolor{cpgcolor}
Spikformer w/ CPG-PE [Equal Param] & {\color{red} \cmark} & {\color{red} \cmark}  & $7.99$ & $94.60$ & $1.99$ & $78.00$ & $8.02$ & $76.91$ & $83.17$\\




\bottomrule
\end{tabular}
}
\end{table}

We report the parameter counts and classification accuracy in \Cref{tab:image_classification}.
To elaborate, Spikformer with CPG-PE outperforms other variants, demonstrating the effectiveness of CPG-PE even when the sequence is an array of image patches lacking inherent order.
Notably, owing to our streamlined implementation, the parameter count for Spikformer with CPG-PE is significantly reduced compared to the original Spikformer w/ RPE \citep{Zhou2022SpikformerWS}, with a reduction of $1.16$ M.
What's more, we conducted ablation experiments on model parameters by reducing the parameter count of Spikformer with CPG-PE to be comparable to Spikformer w/o PE, allowing for a more direct performance comparison, as shown in the last line in \Cref{tab:image_classification}.
The results on ImageNet are reported in \Cref{app:imagenet}.

However, it is essential to acknowledge that the improvements in image classification are relatively modest compared to those observed in time series and text applications.
This phenomenon can largely be attributed to the intrinsic \emph{non-ordered nature} of image patches.
Unlike text or time series data, where sequential order is crucial and inherently informative, image patches do not possess a natural or fixed sequence.
This lack of order means that traditional methods of positional encoding, which significantly benefit ordered data by providing contextual positioning, are less effective.
Thus, the application of our positional encoding techniques, optimized for data with inherent sequential order, does not translate as effectively to the domain of image classification. 

\subsection{Sweeping CPG properties}
\label{sec:sweep_cpg}

In this section, we investigate the influence of CPG properties on the ability to model positional information, addressing \textbf{RQ3}. To this end, we evaluated the Spikformer model with CPG-PE by varying the base period $\tau$ and the number of CPG pairs $N$ (see \Cref{equ:cpg_pe_cos,equ:cpg_pe_sin}) in time-series forecasting and image classification tasks.

\begin{figure*}[htp]
\centering
\subfigure[]{
   \centering
   \includegraphics[width=0.23 \textwidth]{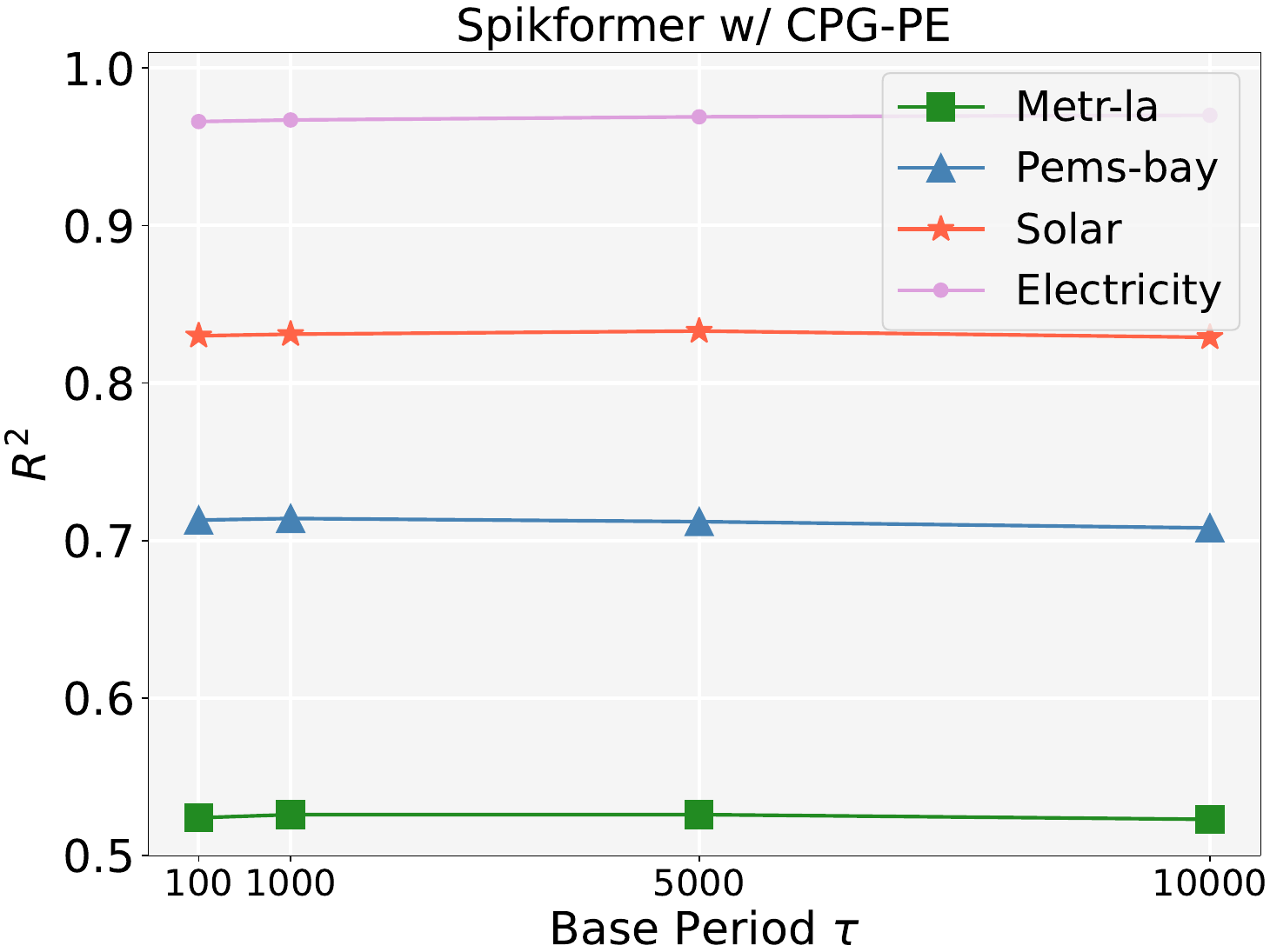}
}
\subfigure[]{
    \centering
    \includegraphics[width=0.23 \textwidth]{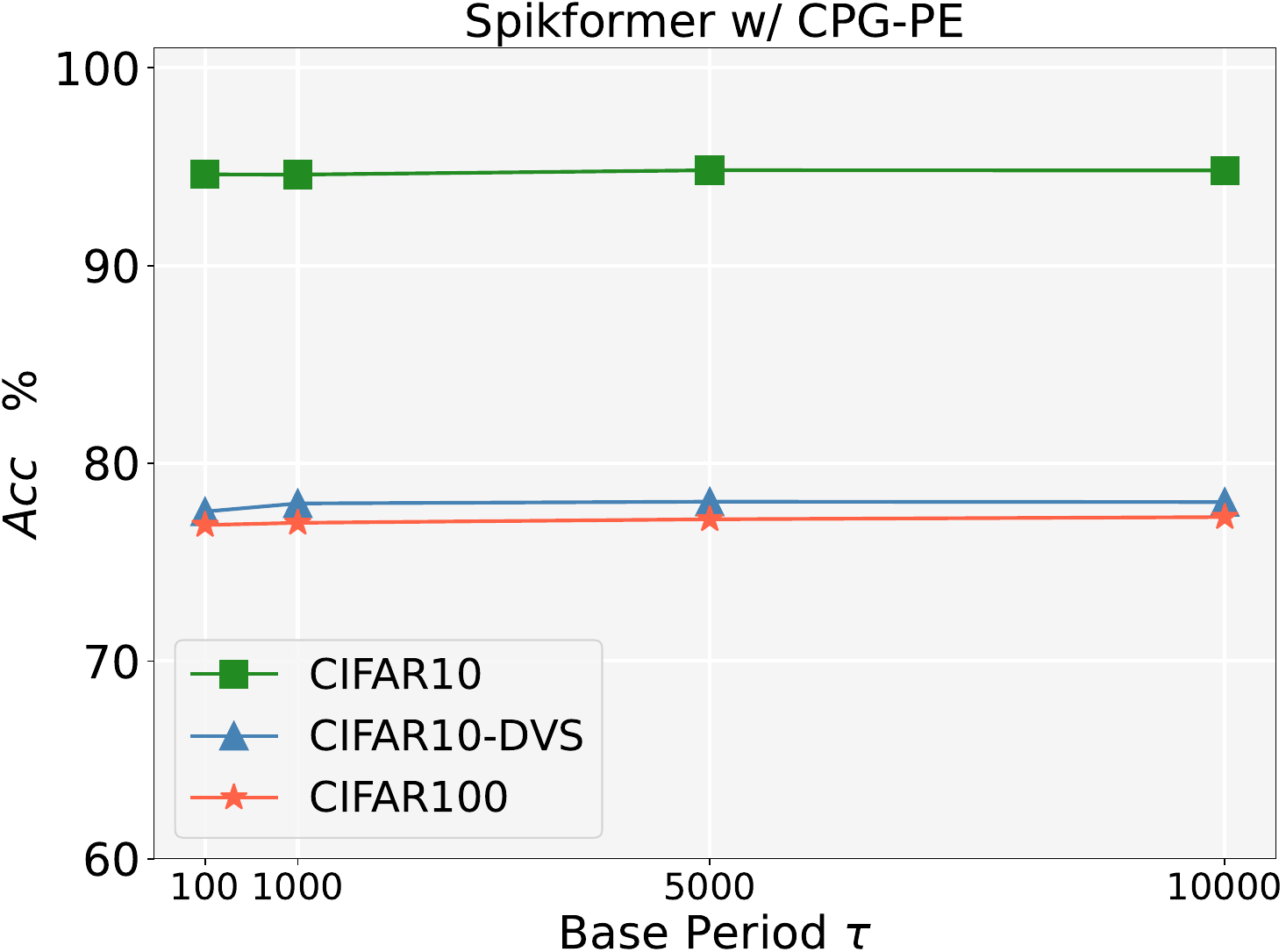}
}
\subfigure[]{
    \centering
    \includegraphics[width=0.23 \textwidth]{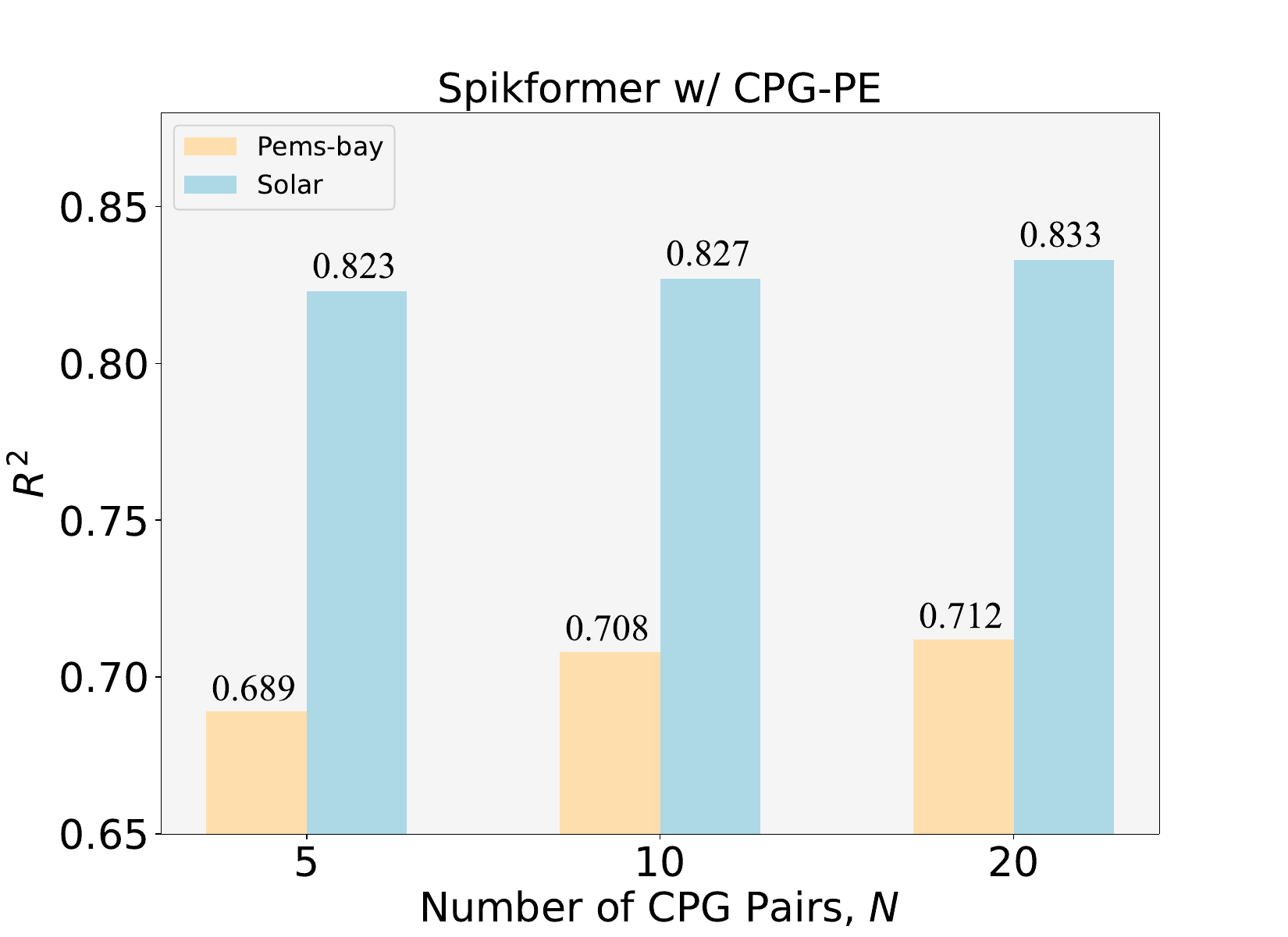}
}
\subfigure[]{
    \centering
    \includegraphics[width=0.23 \textwidth]{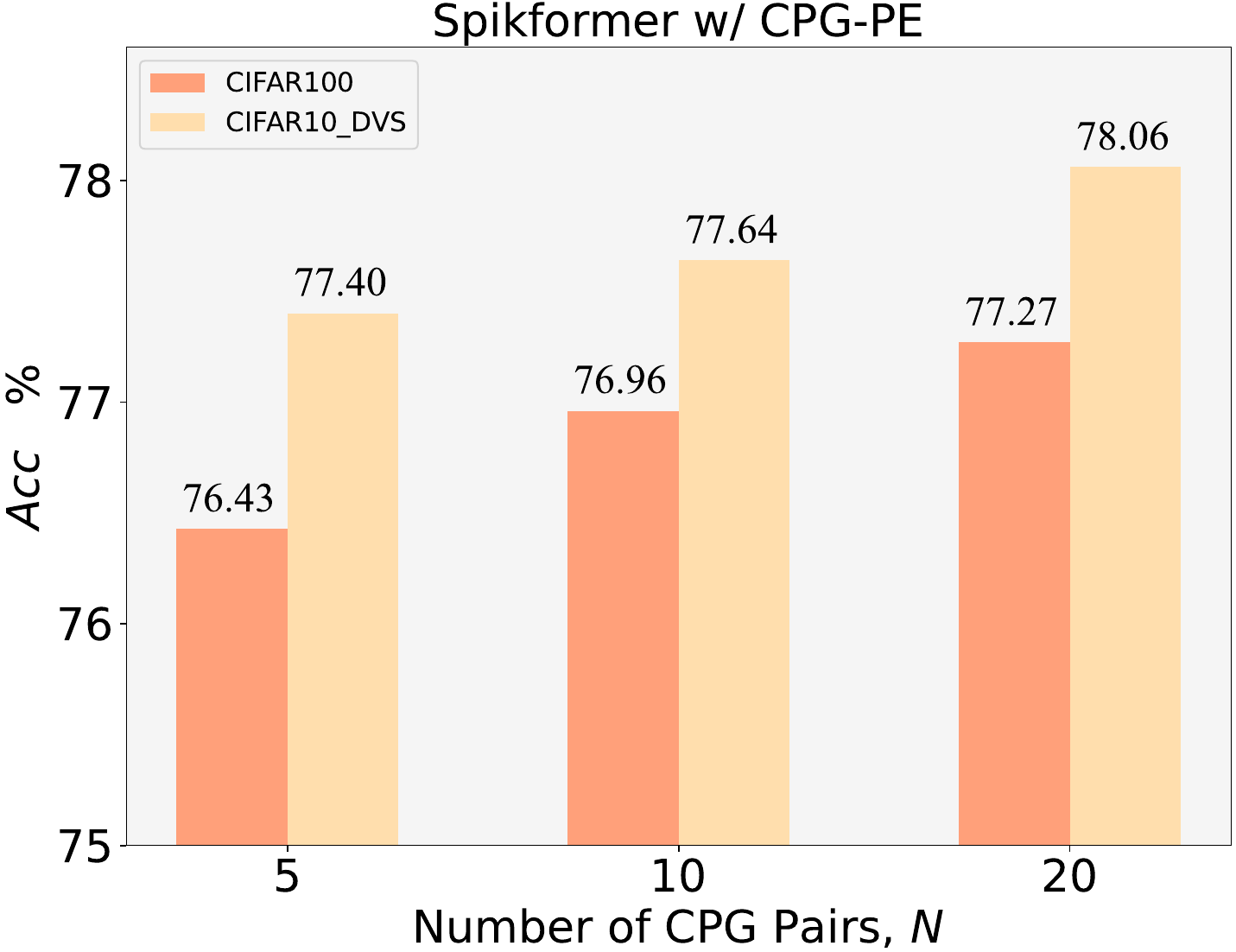}
}
\vspace{-3mm}
\caption{\label{fig:cpg_property}
(a)(c) $R^2$ versus $\tau$ and $N$ on time-series forecasting tasks.
(b)(d) Accuracy versus $\tau$ and $N$ on image classification tasks.
$\tau \in \{100,1000,5000,10000\}$, $N \in \{5,10,20\}$.
}
\vspace{-1mm}
\end{figure*}

From \Cref{fig:cpg_property} (a) and (b), we observe that CPG-PE is insensitive to the base period $\tau$ (in biological neurons, $\tau$ is affected by the physiological properties of the CPG circuit such as RC constant and synaptic delay). 
The sequence lengths ($T \times L$) of the time series and image patches are no larger than $672$ $(4 \times 168)$ for all benchmarks, preventing repetitions in CPGs.
Therefore, when $N=20$, sweeping $\tau \in \{100, 1000, 5000, 10000\}$ makes minor influence on performance.
Furthermore, \Cref{fig:cpg_property} (c) and (d) demonstrate that when $\tau=10000$, increasing the number of CPG pairs $N$ enhances Spikformer's performance.
This is reasonable because more CPG neurons reduce repetitions in positional representations of $X'$.

\subsection{Positional Encoding Analysis}
In this section, we want to address \textbf{RQ4}.
As mentioned in \Cref{sec:pe}, an ideal PE method for SNNs should include the following characteristics:
(1) \textbf{Uniqueness of each position};
(2) \textbf{Ability to discern positional information};
(3) \textbf{Compatibility with neuromorphic hardware};
(4) \textbf{Formulation in spike-form}.
Our experiments confirm the ability to discern positional information (2), our implementations ensure compatibility with neuromorphic hardware (3), and the CPG-PE is inherently formulated in spike-form, satisfying (4).
Therefore,  in order to assess (1), the uniqueness of each position, we would like to compare the CNN-based RPE in \citep{Zhou2022SpikformerWS,yao2023spike} and our proposed CPG-PE, focusing specifically on their capacity to provide distinct positional signals.
This analysis was conducted using the CIFAR10-DVS dataset, where we calculated the repetition rate of spike positional representations across all positions.
Our findings were notable: the positional spike matrices produced by RPE exhibited a repetition rate as high as $\bm{12.19\%}$, which significantly undermines its effectiveness for PE.
In contrast, our proposed CPG-PE exhibited no repetition, demonstrating that our CPG-PE is well-suited for serving as the PE module in SNNs.
Please refer to \Cref{app:imple} for details.

\section{Related Work}

\subsection{Spike Encoding Methods}

Spiking neural networks employ several coding methods to encode input information, each offering unique advantages.
Direct coding \citep{yao2023spike,zhou2024spikformer}, the simplest form and widely-used in image tasks, directly associates spikes with specific values or events, providing straightforward and interpretable outputs but often lacking efficiency for complex tasks.
Rate coding \citep{lv2023spiking,kim2022rate}, where the input is represented by the frequency of spikes within a given timeframe, is more robust and widely used but can be less precise due to its reliance on averaged spike rates.
Temporal coding (a.k.a latency coding) \citep{han2020deep,comsa2022temporal} encodes information based on the timing of individual spikes, allowing for high temporal precision and efficient representation of dynamic inputs, though it can be computationally demanding.
In addition, delta coding \citep{yoon2016lif} represents changes in input signals through spikes, focusing on differences rather than absolute values, which can enhance efficiency and response times but may introduce complexity in decoding.
Each of these methods contributes to the versatility and applicability of SNNs in various domains, from neuroscience to artificial intelligence.
The SNNs we considered in this paper should fall into the category of rate coding since back-prop is conducted on spike rate.
Meanwhile, CPG-PE can be considered converting temporal information into spike rate of a group of neurons (Equations \ref{equ:cpg_pe_sin} and \ref{equ:cpg_pe_cos}), and this is why CPG-PE can improve performance for sequential data.
It is possible to introducing learning algorithms of temporal coding for the CPG neurons to tackle more complex sequence structure, which remains as future work.

\subsection{Positional Encoding in SNNs}
Currently, few works have demonstrated the importance of PE approaches in SNNs.
Spikformer \citep{Zhou2022SpikformerWS} and Spike-driven Transformer \citep{yao2023spike} utilize a combination of ``one convolutional layer + one batch normalization layer + one spiking neuron layer '' to generate learnable ``relative positional encoding''.
From our perspective, this strategy is more like a spike-element-wise residual connection \citep{Fang2021DeepRL}, rather than a classic PE module.
The unique representation of each position is a fundamental requirement for a robust PE module.
However, the spike position matrix generated by their method may result in the same spike representation for different positions.
Additionally, the addition of the input spike matrix and the position spike matrix will result in the occurrence of non-binary numbers (i.e., $2$) due to the addition of $1$ and $1$.
For spiking graph neural networks, \citep{zhao2024dynamic} proposed learnable positional graph spikes, aiming to capture neighbor information within graphs rather than sequences.
Therefore, drawing inspiration from the periodic automatic spike generation pattern of CPGs, we propose a biologically plausible and effective spike-form absolute position encoding method called CPG-PE.

\section{Rethinking the Role of CPGs in Neuroscience}

Our study also provides novel insights into neuroscience on understanding the role of CPG in nervous systems.
While traditionally CPG is believed to play a crucial role in producing the rhythmic motor patterns necessary for locomotion and other repetitive movements \citep{marder2001central, grillner2006biological}, the analogy to PE in this work reveals that CPG can make a significant contribution in processing sequential data by encoding the positional information into unique spiking patterns at different times.
This does not only work for time-series sensory input like auditory signals but also for visual sensory data: e.g. when a person looks at an image, saccades (eye movements) allow retinal neurons to receive different parts of the image at different times.
This indicates that CPG neurons could potentially be utilized to encode positional information.
Another extensive thought is that as PE can be learnable in ANNs, CPG may also benefit from adaptability to the data \citep{yuste2005cortex}.
The hypothesis, however, remains to be examined through neuroscientific experiments \citep{marblestone2016toward}.




\section{Conclusion}

In conclusion, inspired by central pattern generators, we introduce a pioneering position encoding approach termed CPG-PE, specifically tailored to mitigate the constraints associated with current PE techniques within SNNs.
We mathematically prove that abstract PE in the Transformer is a particular solution of the membrane potential variations in a specific type of CPG.
Furthermore, through comprehensive empirical investigations across diverse domains including time-series forecasting, natural language processing, and image classification, we demonstrate that the CPG-PE satisfies all the requirements of PE tailored for SNNs.
The limitations and future work are discussed in \Cref{app:lim}.

\bibliographystyle{unsrt}
\bibliography{neurips_2024}

\begin{thebibliography}{10}

\bibitem{Maas1997NetworksOS}
Wofgang Maass.
\newblock Networks of spiking neurons: the third generation of neural network models.
\newblock {\em Neural Networks}, 14:1659--1671, 1997.

\bibitem{Fang2021DeepRL}
Wei Fang, Zhaofei Yu, Yanqing Chen, Tiejun Huang, Timoth{\'e}e Masquelier, and Yonghong Tian.
\newblock Deep residual learning in spiking neural networks.
\newblock In {\em Neural Information Processing Systems}, 2021.

\bibitem{Ding2021OptimalAC}
Jianhao Ding, Zhaofei Yu, Yonghong Tian, and Tiejun Huang.
\newblock Optimal {ANN-SNN} conversion for fast and accurate inference in deep spiking neural networks.
\newblock In Zhi{-}Hua Zhou, editor, {\em Proceedings of the Thirtieth International Joint Conference on Artificial Intelligence, {IJCAI} 2021}, pages 2328--2336, 2021.

\bibitem{Zhou2022SpikformerWS}
Zhaokun Zhou, Yuesheng Zhu, Chao He, Yaowei Wang, Shuicheng Yan, Yonghong Tian, and Li~Yuan.
\newblock Spikformer: When spiking neural network meets transformer.
\newblock In {\em The Eleventh International Conference on Learning Representations, {ICLR} 2023, Kigali, Rwanda, May 1-5, 2023}, 2023.

\bibitem{yao2023spike}
Man Yao, JiaKui Hu, Zhaokun Zhou, Li~Yuan, Yonghong Tian, XU~Bo, and Guoqi Li.
\newblock Spike-driven transformer.
\newblock In {\em Thirty-seventh Conference on Neural Information Processing Systems}, 2023.

\bibitem{yao2024spikedriven}
Man Yao, JiaKui Hu, Tianxiang Hu, Yifan Xu, Zhaokun Zhou, Yonghong Tian, Bo~XU, and Guoqi Li.
\newblock Spike-driven transformer v2: Meta spiking neural network architecture inspiring the design of next-generation neuromorphic chips.
\newblock In {\em The Twelfth International Conference on Learning Representations}, 2024.

\bibitem{Fang2020IncorporatingLM}
Wei Fang, Zhaofei Yu, Yanqing Chen, Timoth{\'e}e Masquelier, Tiejun Huang, and Yonghong Tian.
\newblock Incorporating learnable membrane time constant to enhance learning of spiking neural networks.
\newblock {\em 2021 IEEE/CVF International Conference on Computer Vision (ICCV)}, pages 2641--2651, 2020.

\bibitem{lv2023spiking}
Changze Lv, Jianhan Xu, and Xiaoqing Zheng.
\newblock Spiking convolutional neural networks for text classification.
\newblock In {\em The Eleventh International Conference on Learning Representations (ICLR)}, 2023.

\bibitem{li2024seenn}
Yuhang Li, Tamar Geller, Youngeun Kim, and Priyadarshini Panda.
\newblock Seenn: Towards temporal spiking early exit neural networks.
\newblock {\em Advances in Neural Information Processing Systems}, 36, 2024.

\bibitem{wu2018spatio}
Yujie Wu, Lei Deng, Guoqi Li, and Luping Shi.
\newblock Spatio-temporal backpropagation for training high-performance spiking neural networks.
\newblock {\em Frontiers in neuroscience}, 12:323875, 2018.

\bibitem{zheng2021going}
Hanle Zheng, Yujie Wu, Lei Deng, Yifan Hu, and Guoqi Li.
\newblock Going deeper with directly-trained larger spiking neural networks.
\newblock In {\em Proceedings of the AAAI conference on artificial intelligence}, volume~35, pages 11062--11070, 2021.

\bibitem{duan2022temporal}
Chaoteng Duan, Jianhao Ding, Shiyan Chen, Zhaofei Yu, and Tiejun Huang.
\newblock Temporal effective batch normalization in spiking neural networks.
\newblock {\em Advances in Neural Information Processing Systems}, 35:34377--34390, 2022.

\bibitem{zhou2023spikingformer}
Chenlin Zhou, Liutao Yu, Zhaokun Zhou, Han Zhang, Zhengyu Ma, Huihui Zhou, and Yonghong Tian.
\newblock Spikingformer: Spike-driven residual learning for transformer-based spiking neural network.
\newblock {\em arXiv preprint arXiv:2304.11954}, 2023.

\bibitem{gu2020hippo}
Albert Gu, Tri Dao, Stefano Ermon, Atri Rudra, and Christopher R{\'e}.
\newblock Hippo: Recurrent memory with optimal polynomial projections.
\newblock {\em Advances in neural information processing systems}, 33:1474--1487, 2020.

\bibitem{Vaswani2017AttentionIA}
Ashish Vaswani, Noam~M. Shazeer, Niki Parmar, Jakob Uszkoreit, Llion Jones, Aidan~N. Gomez, Lukasz Kaiser, and Illia Polosukhin.
\newblock Attention is all you need.
\newblock {\em ArXiv}, abs/1706.03762, 2017.

\bibitem{zador2019critique}
Anthony~M Zador.
\newblock A critique of pure learning and what artificial neural networks can learn from animal brains.
\newblock {\em Nature communications}, 10(1):3770, 2019.

\bibitem{mitchell2021ai}
Melanie Mitchell.
\newblock Why ai is harder than we think.
\newblock In {\em Proceedings of the Genetic and Evolutionary Computation Conference}, pages 3--3, 2021.

\bibitem{marder2001central}
Eve Marder and Dirk Bucher.
\newblock Central pattern generators and the control of rhythmic movements.
\newblock {\em Current biology}, 11(23):R986--R996, 2001.

\bibitem{marder1996principles}
Eve Marder and Ronald~L Calabrese.
\newblock Principles of rhythmic motor pattern generation.
\newblock {\em Physiological reviews}, 76(3):687--717, 1996.

\bibitem{grillner2006biological}
Sten Grillner.
\newblock Biological pattern generation: the cellular and computational logic of networks in motion.
\newblock {\em Neuron}, 52(5):751--766, 2006.

\bibitem{kiehn2016decoding}
Ole Kiehn.
\newblock Decoding the organization of spinal circuits that control locomotion.
\newblock {\em Nature Reviews Neuroscience}, 17(4):224--238, 2016.

\bibitem{liu2022petr}
Yingfei Liu, Tiancai Wang, Xiangyu Zhang, and Jian Sun.
\newblock Petr: Position embedding transformation for multi-view 3d object detection.
\newblock In {\em European Conference on Computer Vision}, pages 531--548. Springer, 2022.

\bibitem{dosovitskiy2021an}
Alexey Dosovitskiy, Lucas Beyer, Alexander Kolesnikov, Dirk Weissenborn, Xiaohua Zhai, Thomas Unterthiner, Mostafa Dehghani, Matthias Minderer, Georg Heigold, Sylvain Gelly, Jakob Uszkoreit, and Neil Houlsby.
\newblock An image is worth 16x16 words: Transformers for image recognition at scale.
\newblock In {\em International Conference on Learning Representations}, 2021.

\bibitem{SpikingJelly}
Wei Fang, Yanqi Chen, Jianhao Ding, Ding Chen, Zhaofei Yu, Huihui Zhou, Yonghong Tian, and other contributors.
\newblock Spikingjelly, 2020.

\bibitem{Werbos1990BackpropagationTT}
Paul~J. Werbos.
\newblock Backpropagation through time: What it does and how to do it.
\newblock {\em Proc. IEEE}, 78:1550--1560, 1990.

\bibitem{shaw-etal-2018-self}
Peter Shaw, Jakob Uszkoreit, and Ashish Vaswani.
\newblock Self-attention with relative position representations.
\newblock In {\em Proceedings of the 2018 Conference of the North {A}merican Chapter of the Association for Computational Linguistics: Human Language Technologies, Volume 2 (Short Papers)}, pages 464--468. Association for Computational Linguistics, 2018.

\bibitem{shiv2019novel}
Vighnesh Shiv and Chris Quirk.
\newblock Novel positional encodings to enable tree-based transformers.
\newblock {\em Advances in neural information processing systems}, 32, 2019.

\bibitem{raffel2020exploring}
Colin Raffel, Noam Shazeer, Adam Roberts, Katherine Lee, Sharan Narang, Michael Matena, Yanqi Zhou, Wei Li, and Peter~J Liu.
\newblock Exploring the limits of transfer learning with a unified text-to-text transformer.
\newblock {\em Journal of machine learning research}, 21(140):1--67, 2020.

\bibitem{Lv2023SpikeBERTAL}
Changze Lv, Tianlong Li, Jianhan Xu, Chenxi Gu, Zixuan Ling, Cenyuan Zhang, Xiaoqing Zheng, and Xuanjing Huang.
\newblock Spikebert: A language spikformer learned from bert with knowledge distillation.
\newblock 2023.

\bibitem{bal2024spikingbert}
Malyaban Bal and Abhronil Sengupta.
\newblock Spikingbert: Distilling bert to train spiking language models using implicit differentiation.
\newblock In {\em Proceedings of the AAAI Conference on Artificial Intelligence}, volume~38, pages 10998--11006, 2024.

\bibitem{zhu2023spikegpt}
Rui-Jie Zhu, Qihang Zhao, and Jason~K Eshraghian.
\newblock Spikegpt: Generative pre-trained language model with spiking neural networks.
\newblock {\em arXiv preprint arXiv:2302.13939}, 2023.

\bibitem{lv2024efficient}
Changze Lv, Yansen Wang, Dongqi Han, Xiaoqing Zheng, Xuanjing Huang, and Dongsheng Li.
\newblock Efficient and effective time-series forecasting with spiking neural networks.
\newblock In {\em Forty-first International Conference on Machine Learning (ICML)}, 2024.

\bibitem{li2017diffusion}
Yaguang Li, Rose Yu, Cyrus Shahabi, and Yan Liu.
\newblock Diffusion convolutional recurrent neural network: Data-driven traffic forecasting.
\newblock {\em arXiv preprint arXiv:1707.01926}, 2017.

\bibitem{lai2018modeling}
Guokun Lai, Wei-Cheng Chang, Yiming Yang, and Hanxiao Liu.
\newblock Modeling long-and short-term temporal patterns with deep neural networks.
\newblock In {\em The 41st International ACM SIGIR Conference on Research \& Development in Information Retrieval}, pages 95--104, 2018.

\bibitem{Pang2005SeeingSE}
Bo~Pang and Lillian Lee.
\newblock Seeing stars: Exploiting class relationships for sentiment categorization with respect to rating scales.
\newblock In {\em ACL}, 2005.

\bibitem{Socher2013RecursiveDM}
Richard Socher, Alex Perelygin, Jean Wu, Jason Chuang, Christopher~D. Manning, A.~Ng, and Christopher Potts.
\newblock Recursive deep models for semantic compositionality over a sentiment treebank.
\newblock In {\em EMNLP}, 2013.

\bibitem{Li2017CIFAR10DVSAE}
Hongmin Li, Hanchao Liu, Xiangyang Ji, Guoqi Li, and Luping Shi.
\newblock Cifar10-dvs: An event-stream dataset for object classification.
\newblock {\em Frontiers in Neuroscience}, 11, 2017.

\bibitem{bai2018empirical}
Shaojie Bai, J~Zico Kolter, and Vladlen Koltun.
\newblock An empirical evaluation of generic convolutional and recurrent networks for sequence modeling.
\newblock {\em arXiv preprint arXiv:1803.01271}, 2018.

\bibitem{Devlin2019BERTPO}
Jacob Devlin, Ming-Wei Chang, Kenton Lee, and Kristina Toutanova.
\newblock Bert: Pre-training of deep bidirectional transformers for language understanding.
\newblock In {\em North American Chapter of the Association for Computational Linguistics}, 2019.

\bibitem{zhou2024spikformer}
Zhaokun Zhou, Kaiwei Che, Wei Fang, Keyu Tian, Yuesheng Zhu, Shuicheng Yan, Yonghong Tian, and Li~Yuan.
\newblock Spikformer v2: Join the high accuracy club on imagenet with an snn ticket.
\newblock {\em arXiv preprint arXiv:2401.02020}, 2024.

\bibitem{kim2022rate}
Youngeun Kim, Hyoungseob Park, Abhishek Moitra, Abhiroop Bhattacharjee, Yeshwanth Venkatesha, and Priyadarshini Panda.
\newblock Rate coding or direct coding: Which one is better for accurate, robust, and energy-efficient spiking neural networks?
\newblock In {\em ICASSP 2022-2022 IEEE International Conference on Acoustics, Speech and Signal Processing (ICASSP)}, pages 71--75. IEEE, 2022.

\bibitem{han2020deep}
Bing Han and Kaushik Roy.
\newblock Deep spiking neural network: Energy efficiency through time based coding.
\newblock In {\em European conference on computer vision}, pages 388--404. Springer, 2020.

\bibitem{comsa2022temporal}
IM~Comsa, K~Potempa, L~Versari, T~Fischbacher, A~Gesmundo, and J~Alakuijala.
\newblock Temporal coding in spiking neural networks with alpha synaptic function: Learning with backpropagation.
\newblock {\em IEEE Transactions on Neural Networks and Learning Systems}, 33(10):5939--5952, 2022.

\bibitem{yoon2016lif}
Young~C Yoon.
\newblock Lif and simplified srm neurons encode signals into spikes via a form of asynchronous pulse sigma--delta modulation.
\newblock {\em IEEE transactions on neural networks and learning systems}, 28(5):1192--1205, 2016.

\bibitem{zhao2024dynamic}
Han Zhao, Xu~Yang, Cheng Deng, and Junchi Yan.
\newblock Dynamic reactive spiking graph neural network.
\newblock In {\em Proceedings of the AAAI Conference on Artificial Intelligence}, volume~38, pages 16970--16978, 2024.

\bibitem{yuste2005cortex}
Rafael Yuste, Jason~N MacLean, Jeffrey Smith, and Anders Lansner.
\newblock The cortex as a central pattern generator.
\newblock {\em Nature Reviews Neuroscience}, 6(6):477--483, 2005.

\bibitem{marblestone2016toward}
Adam~H Marblestone, Greg Wayne, and Konrad~P Kording.
\newblock Toward an integration of deep learning and neuroscience.
\newblock {\em Frontiers in computational neuroscience}, 10:94, 2016.

\bibitem{kingma2014adam}
Diederik~P Kingma and Jimmy Ba.
\newblock Adam: A method for stochastic optimization.
\newblock {\em arXiv preprint arXiv:1412.6980}, 2014.

\bibitem{loshchilov2018decoupled}
Ilya Loshchilov and Frank Hutter.
\newblock Decoupled weight decay regularization.
\newblock In {\em International Conference on Learning Representations}, 2018.

\end{thebibliography}

\newpage

\appendix

\section*{Acknowledgments}
The authors would like to thank the anonymous reviewers for their valuable comments.
This work was supported partially by National Natural Science Foundation of China (No. 62076068).

\section*{Broader Impact}
\label{sec:broader_impact}
This work aims to advance the field of spiking neural networks (SNNs).
Unlike artificial neural networks (ANNs) which have been applied widely in people's lives, SNNs are still undergoing fundamental research.
We do not see any negative societal impacts of this work.

\section*{Reproducibility Statement}
The authors have diligently worked to ensure the reproducibility of the empirical results presented in this paper. The datasets, experimental setups, evaluation metrics, and hyperparameters are thoroughly described in \Cref{app:datasets,app:imple}.
Furthermore, the source code for the proposed PE method has been available at \url{https://github.com/microsoft/SeqSNN}.

\section{Datasets} \label{app:datasets}

\subsection{Time-series Forecasting}

Detailed statistical characteristics and distribution ratios for each dataset are provided in the following:

\begin{table}[h]
\centering
\caption{The statistics of time-series datasets.}
\label{tab:exp setting}
\resizebox{0.8\linewidth}{!}{
\begin{tabular}{lcccc}
\hline
Dataset & Samples & Variables & Observation Length & Train-Valid-Test Ratio \\
\hline
Metr-la & $34,272$ & $207$ & $12, (\operatorname{short-term})$ & $(0.7, 0.2, 0.1)$ \\
Pems-bay & $52,116$ & $325$ & $12, (\operatorname{short-term})$  & $(0.7, 0.2, 0.1)$ \\
Solar-energy & $52,560$ & $137$ & $168, (\operatorname{long-term})$ & $(0.6, 0.2, 0.2)$  \\
Electricity & $26,304$ & $321$ & $168, (\operatorname{long-term})$ & $(0.6, 0.2, 0.2)$ \\
\hline
\end{tabular}
}
\end{table}

\subsection{Text Classification}
Here are the datasets we used in text classification experiments:
\begin{itemize}
\setlength{\itemsep}{0pt}
\setlength{\parsep}{0pt}
\setlength{\parskip}{0pt}

\item{\textbf{MR}}:
MR, which stands for Movie Review, is a dataset containing movie-review documents labeled based on their overall sentiment polarity (positive or negative) or subjective rating \citep{Pang2005SeeingSE}.

\item{\textbf{SST-$\bf 5$}}:
SST-$5$ includes $11,855$ sentences from movie reviews for sentiment classification across $5$ categories: very negative, negative, neutral, positive, and very positive \citep{Socher2013RecursiveDM}.

\item{\textbf{SST-$\bf 2$}}:
SST-$2$ is the binary version of SST-$5$, containing only $2$ classes: positive and negative.

\item{\textbf{Subj}}:
The Subj dataset is designed to classify sentences as either subjective or objective\footnote{\scriptsize{\url{https://www.cs.cornell.edu/people/pabo/movie-review-data/}}}.

\item{\textbf{ChnSenti}}:
ChnSenti consists of approximately $7,000$ Chinese hotel reviews, each annotated with a positive or negative label\footnote{\scriptsize{\url{https://raw.githubusercontent.com/SophonPlus/ChineseNlpCorpus/master/datasets/ChnSentiCorp_htl_all/ChnSentiCorp_htl_all.csv}}}.

\item{\textbf{Waimai}}:
This dataset contains around $12,000$ Chinese user reviews from a food delivery platform, intended for binary sentiment classification (positive and negative)\footnote{\scriptsize{\url{https://raw.githubusercontent.com/SophonPlus/ChineseNlpCorpus/master/datasets/waimai_10k/waimai_10k.csv}}}.
\end{itemize}

\subsection{Image Classification}
Here are the datasets we used in image classification experiments:
CIFAR dataset comprises a collection of $60,000$ images, partitioned into $50,000$ training and $10,000$ testing images, each with a resolution of $32\times32$ pixels.
The CIFAR10-DVS dataset represents a neuromorphic adaptation of this original set, where static images have been transformed to accommodate the recording capabilities of a Dynamic Vision Sensor (DVS) camera.
This conversion results in a dataset consisting of $9,000$ training samples and $1,000$ test samples with $128\times128$ resolution.

\section{Experiment Settings}\label{app:imple}

\subsection{Time-series Forecasting}

\paragraph{Metrices}
The metrics we used in time-series forecasting are the coefficient of determination (R$^2$) and the Root Relative Squared Error (RSE).
\begin{align}
R^2&=\frac1{MCL}\sum_{m=1}^M\sum_{c=1}^C\sum_{l=1}^L\left[1-\frac{(Y^m_{c,l}-\hat{Y}^m_{c,l})^2}{(Y^m_{c,l}-\bar{Y}_{c,l})^2}\right].\\
\mathrm{RSE}&=\sqrt{\frac{\sum_{m=1}^M||\mathbf{Y}^m-\hat{\mathbf{Y}}^m||^2}{\sum_{m=1}^M||\mathbf{Y}^m-\bar{\mathbf{Y}}||^2}}
\end{align}
In these formulas, $M$ symbolizes the size of the test sets, $C$ denotes the number of channels, and $L$ signifies the length of predictions. $\bar{\mathbf{Y}}$ represents the average of $\mathbf{Y}^m$. The term $Y^m_{c,l}$ refers to the $l$-th future value of the $c$-th variable for the $m$-th sample, while $\bar{Y}_{c,l}$ indicates the mean of $Y^m_{c,l}$ across all samples. The symbols $\hat{\mathbf{Y}}^m$ and $\hat{Y}_{c, l}^{m}$ are used to represent the ground truth values.
Compared to Mean Squared Error (MSE) or Mean Absolute Error (MAE), these metrics offer greater resilience against the absolute values of the datasets, making them particularly useful in the time-series forecasting setting.

\paragraph{Model Architecture}
All SNNs take $4$ time steps.
For SpikeTCNs and SpikeRNNs, we follow the same settings as \citep{lv2024efficient}.
We construct all Spikformer as $2$ blocks, setting the feature dimension as $256$, and the hidden feature dimension in FFN as $1024$.
For CPG-PE settings, we set $\tau = 10000.0$, $N=20$, $\eta=1$, and $v^{\text{thres}}=0.8$.

\paragraph{Training Hyper-parameters}
we set the training batch size as $64$ and adopt Adam \cite{kingma2014adam} optimizer with a cosine scheduler of learning rate $1\times 10^{-4}$.
An early stopping strategy with a tolerance of $30$ epochs is adopted.
We conducted time-series forecasting experiments on 24G-V100 GPUs.
On average, a single experiment takes about $1$ hour under the settings above.

\subsection{Text Classification}
\paragraph{Model Achirecture}
All models are with $12$ encoder blocks and $768$
feature embedding dimension.
It is important to note that the original implementation of \citep{Lv2023SpikeBERTAL} incorporates a layer normalization module that poses challenges to hardware compatibility.
To address this, we have substituted layer normalization with batch normalization in our directly-trained Spikformer models for text classification tasks.
For CPG-PE settings, we set $\tau = 10000.0$, $N=20$, $\eta=1$, and $v^{\text{thres}}=0.8$.

\paragraph{Training Hyper-parameters}
We directly trained Spikformers with arctangent surrogate gradients on all datasets.
We use the BERT-Tokenizer in Huggingface\footnote{\url{https://huggingface.co/}} to tokenize the sentences to token sequences.
We pad all samples to the same sequence length of $256$.
We conducted text classification experiments on $4$ RTX-3090 GPUs, and set the batch size as $32$, optimizer as AdamW \cite{loshchilov2018decoupled} with weight decay of $5 \times 10^{-3}$, and set a cosine scheduler of starting learning rate of $5\times 10^{-4}$.
What's more, in order to speed up the training stage, we adopt the automatic mixed precision training strategy.
On average, a single experiment takes about $1.5$ hours under the settings above.

\subsection{Image Classification}

\paragraph{Model Architecture}
For all Spikformer models, we standardized the configuration to include $4$ time steps. Specifically, for the CIFAR10 and CIFAR100 datasets, the models were uniformized with $4$ encoder blocks and a feature embedding dimension of $384$.
For the CIFAR10-DVS dataset, the models were adjusted to have $2$ encoder blocks and a feature embedding dimension of $256$.
For CPG-PE settings, we set $\tau = 10000.0$, $N=20$, $\eta=2\pi$, and $v^{\text{thres}}=0.8$.

\paragraph{Training Hyper-parameters}
We honestly follow the experimental settings in \citep{Zhou2022SpikformerWS}, whose source code and configuration files are available at \url{https://github.com/ZK-Zhou/spikformer}.
As the training epochs are quite big ($300$ epochs) in their settings, we choose to use one 80G-A100 GPU, and it takes about $3$ hours to conduct a single experiment, on average.

\subsection{Details about Positional Encoding Analysis}

We conducted positional encoding analysis experiments on the CIFAR10-DVS dataset.
For the original Spikformer with relative positional encoding (RPE) as described by \citep{Zhou2022SpikformerWS}, the input and output channels of Conv2d are both set to $384$.
In our Spikformer with CPG-PE, the parameters are set to \(\tau = 10000.0\) and \(N = 20\).
Given the time step \(T = 4\) and the sequence length \(L = 160\) for the image patches in CIFAR10-DVS samples, the total "length" \(T \times L\) in CPG-PE is $640$.
We then calculated the repetition rate of positions. The results showed that the repetition rate for RPE is $12.19\%$, whereas for CPG-PE, it is $0.00\%$.

\section{Implement CPG-PE with LIF Neurons}\label{app:cpg_pe_with_lif}
In this section, we demonstrate that CPG-PE is a hardware-friendly design. While implementing the sinusoidal potential on the neuromorphic chips is not challenging (e.g., by maintaining additional LC circuits), we show how a CPG-PE neuron can be physically implemented with only 2 LIF neurons defined by \Cref{equ:membranePotential,equ:ht,equ:s(t)} and thus introducing no extra efforts on chip designs. 

A CPG-PE neuron, after discretization, can be viewed as an autonomic neuron that will emit a burst of $K$ spikes after resting for $R$ time steps. The key idea is to set two LIF neurons, namely the \textit{Emitter} and the \textit{Resetter}. The emitter will draw constant current from the source, and as soon as its membrane potential reaches the threshold after $R$ time steps, it will start emitting spikes constantly until receiving the reset signal from the resetter. The resetter, which will remain at the resting potential until it receives signals from the emitter, will count the number of spikes and emit a reset signal (inhibition signal) to the emitter after receiving $K$ spikes.

We first prove the following Lemma, which establishes the relationship between the start time of the first spike and a constant input current. 
\begin{lemma}
\label{lem:LIF_spike}
    Given an LIF neuron defined by \Cref{equ:membranePotential,equ:ht,equ:s(t)} with decay rate $\beta$ and threshold $U_{thr}$, starting with resting potential $U(0)=0$, if fed with the constant current $I(t)=I_c > 0$, the first spike will emit at:
    \begin{align}
        T_{min} = \left\lceil \log_\beta(\beta-\frac{U_{thr}\beta(1-\beta)}{I_c})\right\rceil.
    \end{align}
\end{lemma}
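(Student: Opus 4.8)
The plan is to exploit the fact that, before the very first spike, the neuron never resets, so its dynamics are purely linear and can be solved in closed form. First I would observe that for every time step preceding the first spike we have $S=0$, so \Cref{equ:ht} degenerates to $H(t)=\beta U(t)$; substituting this into \Cref{equ:membranePotential} turns the coupled system into a single first-order linear recurrence $U(t+\Delta t)=\beta U(t)+I_c$ with initial condition $U(0)=0$. This is the key reduction that makes the problem tractable: the nonlinear Heaviside step in \Cref{equ:s(t)} is inert until the threshold is reached, so it enters only through the stopping condition, not through the evolution itself.

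Next I would unroll this recurrence. Because the map is affine with constant forcing $I_c$, iterating it produces a geometric sum, giving the membrane potential at the $n$-th step as $U(n)=I_c\sum_{k=0}^{n-1}\beta^{k}=I_c\frac{1-\beta^{n}}{1-\beta}$. Since $0<\beta<1$, this sequence is strictly increasing and converges to the fixed point $I_c/(1-\beta)$; at this stage I would record the nondegeneracy condition $U_{thr}<I_c/(1-\beta)$, which is exactly the positivity of the argument that will appear inside the logarithm and which guarantees that the threshold is actually crossed in finitely many steps.

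With the closed form in hand, the first spike time is the least integer $n$ for which $U(n)\ge U_{thr}$. I would rearrange $I_c\frac{1-\beta^{n}}{1-\beta}\ge U_{thr}$ to isolate the geometric factor as $\beta^{n}\le 1-\frac{U_{thr}(1-\beta)}{I_c}$, and then apply $\log_\beta$. The one point that needs genuine care here is that, because the base satisfies $0<\beta<1$, the function $\log_\beta$ is strictly decreasing, so the inequality reverses; the least admissible integer is then recovered by taking a ceiling, producing $T_{min}=\lceil\log_\beta(\cdot)\rceil$.

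The main obstacle I anticipate is bookkeeping rather than conceptual: pinning down the discretization convention so that the constant inside the logarithm matches the stated $\beta-\frac{U_{thr}\beta(1-\beta)}{I_c}$ exactly, including the leading factor of $\beta$ (equivalently an index shift by one step), which depends on precisely when the constant current is first integrated relative to the step at which the threshold test is evaluated. I would resolve this by explicitly tracking $U(0),U(\Delta t),U(2\Delta t),\dots$ against the spike test and then verifying that the integer-ceiling accounting reproduces the claimed $T_{min}$.
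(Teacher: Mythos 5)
Your proposal is correct and follows essentially the same route as the paper's proof: observe that the Heaviside nonlinearity is inert before the first spike, reduce to the affine recurrence $U(k\Delta t)=\beta U((k-1)\Delta t)+I_c$ with $U(0)=0$, sum the geometric series, and invert with $\log_\beta$ (reversing the inequality since $0<\beta<1$) before taking a ceiling. The one-step index ambiguity you flag is real --- the paper's own closed form is $U(k\Delta t)=\tfrac{I_c}{\beta}\bigl(\tfrac{1-\beta^k}{1-\beta}-1\bigr)=I_c\,\tfrac{1-\beta^{k-1}}{1-\beta}$, shifted by one step relative to your $I_c\,\tfrac{1-\beta^{n}}{1-\beta}$, and that shift is exactly what produces the leading factor of $\beta$ inside the logarithm in the stated $T_{min}$.
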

\begin{proof}
    By definition, before the time to emit the first spike, we have $S(t)=0$. Thus \Cref{equ:membranePotential} can be rewrite as:
    \begin{align}
        U(k\Delta t) = \beta U((k-1)\Delta t)+I_c.
    \end{align}
    Simplifying the recurrence relation, we can obtain:
    \begin{align}
        U(k\Delta t) = \frac{I_c}{\beta}\left(\frac{1-\beta^k}{1-\beta}-1\right).
    \end{align}
    The first spike is generated when $U(k\Delta t)\leq U_{thr}$, thus we have:
    \begin{align}
    k \geq \log_\beta(\beta-\frac{U_{thr}\beta(1-\beta)}{I_c}),
    \end{align}
    that is to say,
    \begin{align}
    T_{min} = \left\lceil \log_\beta(\beta-\frac{U_{thr}\beta(1-\beta)}{I_c})\right\rceil. 
    \end{align}
\end{proof}

Now we can implement the CPG-PE with LIF neurons:
\begin{theorem}
\label{thm:CPG_imple}
Given 2 LIF neurons, the emitter and the resetter, with decay rate $\beta$, threshold $U_{thr}$, and reset potential $V_{reset}$, starting with resting potential $U_e(0)=U_r(0)=0$. If 
\begin{align}
    I_e(t) &= I_{c1}+S_e(t-\Delta t)(U_{thr}-I_{c1}-V_{reset})-S_r(t-\Delta t)U_{thr},\\
    I_r(t) &= S_e(t-\Delta t)I_{c2}-S_r(t-\Delta t)(I_{c2}+V_{reset}),\\
    I_{c1} &= \frac{U_{thr}\beta(1-\beta)}{\beta-\beta^R},\\
    I_{c2} &= \frac{U_{thr}\beta(1-\beta)}{\beta-\beta^{K-1}},
\end{align}
then the system will have the period of $T=(R+K)\Delta t$, and $\forall i\in\mathbb{N}\cap[0,R+K-1], k\in\mathbb{N}$:
\begin{align}
     S_e(i\Delta t + k T) = \begin{cases}
        0, & 0\leq i < R, \\ 
        1, & R\leq i < R+K.
\end{cases}
\end{align}
\end{theorem}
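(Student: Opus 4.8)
The plan is to decompose a single period into two consecutive phases — a silent \emph{charging} phase of length $R$ followed by a \emph{bursting} phase of length $K$ — to analyze the emitter and resetter separately within each phase using \Cref{lem:LIF_spike}, and then to close the argument by induction over periods once the joint state $(U_e, U_r, S_e, S_r)$ is shown to return to its initial value at step $R+K$. The governing intuition is that the resetter behaves as a spike counter that integrates the emitter's burst at a tuned rate and silences the emitter precisely after $K$ spikes have been produced.

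First I would treat the charging phase, steps $i = 0, \dots, R-1$. Here no neuron has yet fired, so $S_e(t-\Delta t) = S_r(t-\Delta t) = 0$, the emitter current reduces to the constant $I_e(t) = I_{c1}$, and the resetter remains at its resting potential. Substituting $I_{c1} = U_{thr}\beta(1-\beta)/(\beta - \beta^R)$ into \Cref{lem:LIF_spike} collapses the argument of the logarithm to $\beta - (\beta - \beta^R) = \beta^R$, so that $T_{min} = \lceil \log_\beta(\beta^R) \rceil = R$; hence the first emitter spike lands exactly at step $R$ and $S_e = 0$ throughout $0 \le i < R$, as required.

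Next I would establish that the bursting phase is both self-sustaining and correctly terminated. When $S_e(t-\Delta t) = 1$ and $S_r(t-\Delta t) = 0$, the construction forces $I_e(t) = U_{thr} - V_{reset}$, while the post-spike temporal output is $H(t-\Delta t) = V_{reset}$; together these yield $U_e(t) = V_{reset} + (U_{thr}-V_{reset}) = U_{thr}$, sitting exactly at threshold, so the emitter re-fires at every step. Simultaneously the resetter, now driven by $S_e(t-\Delta t) = 1$, integrates the constant current $I_{c2}$ with its potential zero at step $R$; applying \Cref{lem:LIF_spike} with $I_{c2} = U_{thr}\beta(1-\beta)/(\beta - \beta^{K-1})$ gives $T_{min} = K-1$, placing the resetter's spike at step $R+K-1$. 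At the next step the inhibitory term $-S_r(t-\Delta t)\,U_{thr}$ reduces the emitter current to $-V_{reset}$, which together with its post-spike output $H = V_{reset}$ gives $U_e = 0 < U_{thr}$ and silences the emitter, while the resetter's own term $-(I_{c2}+V_{reset})$ simultaneously returns its potential to $0$. Thus the emitter fires exactly at the $K$ steps $R, R+1, \dots, R+K-1$.

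Finally I would close the induction: at step $R+K$ both neurons sit at potential $0$ with no outgoing spike, identical to the state at step $0$, so the silent-then-bursting pattern repeats and the period is $T = (R+K)\Delta t$. I expect the main obstacle to be the off-by-one bookkeeping needed to make all three boundary steps line up: since every drive depends on $S(t-\Delta t)$, the resetter only begins integrating $I_{c2}$ one step after the emitter's first spike, so I must verify that the Lemma's ``time zero'' for the resetter aligns with step $R$ (making its spike fall at $R+K-1$) and that the inhibition therefore reaches the emitter exactly at step $R+K$, neither earlier nor later. A secondary point to record is the implicit admissibility of $R$, $K$, and $\beta$ — so that $\beta - \beta^R$ and $\beta - \beta^{K-1}$ are nonzero and the ceilings collapse to the integers $R$ and $K-1$ — without which $I_{c1}$ and $I_{c2}$ would fail to be well-defined.
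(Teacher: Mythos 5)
Your proposal is correct and follows essentially the same route as the paper's own proof: a silent charging phase of length $R$ handled by \Cref{lem:LIF_spike} with current $I_{c1}$, a self-sustaining burst during which the resetter integrates $I_{c2}$ and fires at step $R+K-1$, and a verification that both membrane potentials return to $0$ at step $R+K$ so the pattern repeats with period $(R+K)\Delta t$. Your additional explicit checks --- that $U_e(t)=V_{reset}+(U_{thr}-V_{reset})=U_{thr}$ sustains the burst, and the off-by-one alignment of the resetter's integration window --- only make more careful points that the paper's proof asserts or glosses over.
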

\begin{proof}
Assuming the first spike generated by the emitter emits at time step $T_1$. For every $0\leq t < T_1$, we have:
\begin{align}
S_e(t) &= S_r(t) = 0, \\
I_e(t) &= I_{c1}, I_r(t) = 0.
\end{align}
Since the input current of the emitter is a constant, by \Cref{lem:LIF_spike}, we immediately get:
\begin{align}
    T_1 = \left\lceil \log_\beta(\beta-\frac{U_{thr}\beta(1-\beta)}{I_{c1}})\right\rceil = R.
\end{align}
Starting from time step $R$, let's assume the first spike generated by the resetter emits at time step $T_2$. Then for every $T_1\leq t < T_2$, we have:
\begin{align}
    S_e(t) &= 1, S_r(t) = 0, \\
    I_e(t) &= U_{thr}-V_{reset}, I_r(t)= I_{c2}.
\end{align}
Starting from $T_1$, for the emitter, the input current allows a spike event for every time step. And the input current of the resetter is a constant. Again, by applying \Cref{lem:LIF_spike}, we can get:
\begin{align}
    T_2 = T_1 + \left\lceil \log_\beta(\beta-\frac{U_{thr}\beta(1-\beta)}{I_{c2}})\right\rceil = R + K - 1. 
\end{align}
Now Consider the state at time step $R + K$:
\begin{align}
    S_e((R+K-1)\Delta t) &= S_r((R+K-1)\Delta t) = 1, \\
    I_e((R+K)\Delta t) &= I_r((R+K)\Delta t) = -V_{reset}, \\
    U_e((R+K)\Delta t) &= U_r((R+K)\Delta t) = 0.
\end{align}
This is the same as the membrane potential at time step 0. Therefore, the system will behave periodically with period $T = (R+K)\Delta T$.
\end{proof}

\Cref{thm:CPG_imple} gives a possible CPG-PE design with 2 LIF neurons, with the emitter generating $K$ consecutive spikes every $R + K$ time steps. This demonstrates that incorporating CPG-PE into the current SNN architecture is completely bio-plausible and will not introduce any burden of redesigning hardware.

\section{Implement CPG-Linear} \label{app:cpg_linear}

\begin{figure*}[htp]
\centering
\includegraphics[width=0.99 \textwidth]{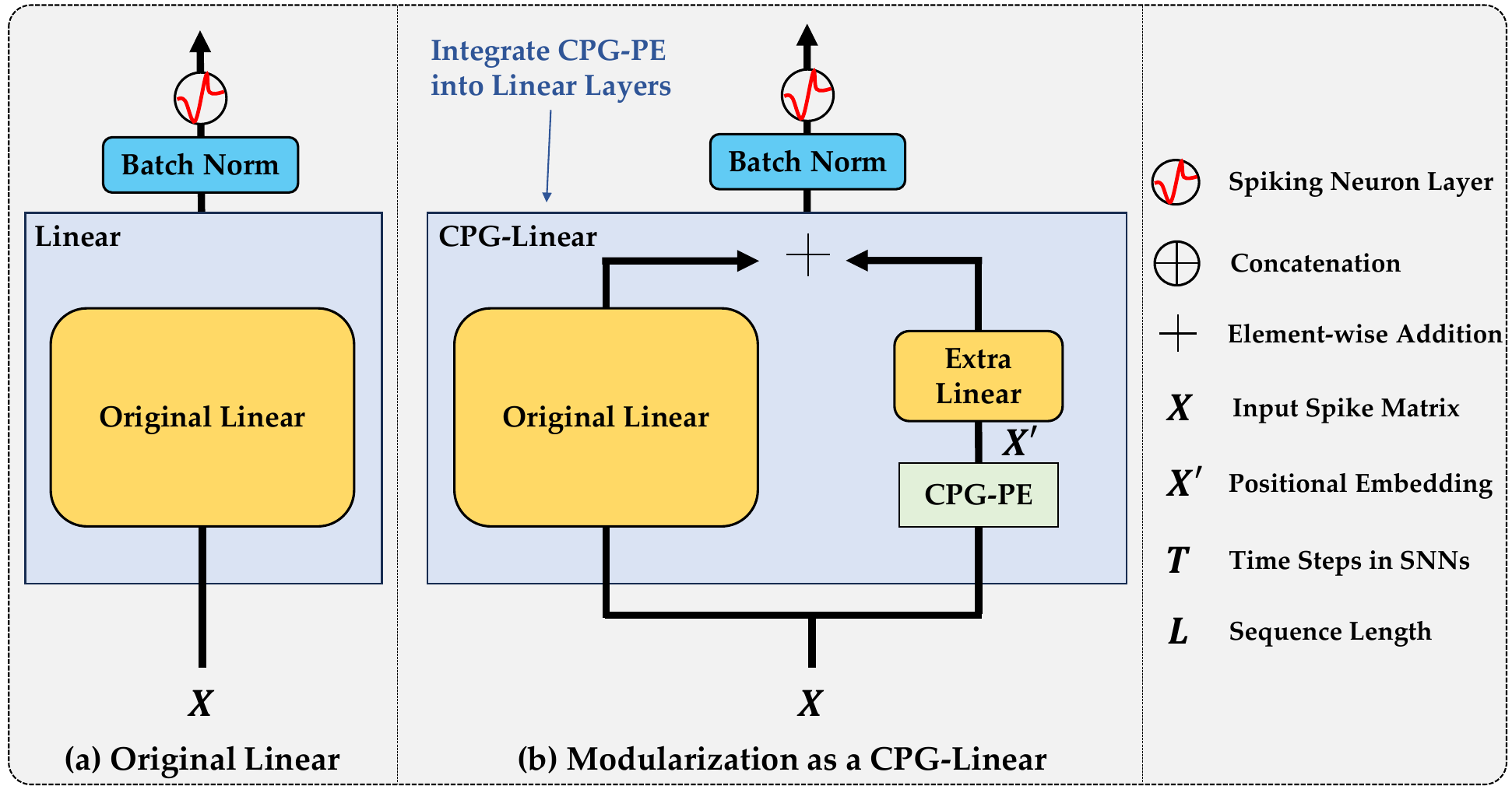}
\caption{\label{fig:cpg_linear}
An illustration of the implementation of integrating a CPG-PE into a linear layer.
}
\vspace{-3mm}
\end{figure*}

We have developed a simple modularization implementation to integrate our proposed CPG-PE with original linear layers, as depicted in \Cref{fig:implem} (b).
Consider the original linear layer's input and output dimensions as $D_{in}$ and $D_{out}$, respectively.
Our objective is to incorporate CPG-PE within this framework.
Following the application of the CPG-PE module, the modified input $X'$ is obtained.
\(X\) is then input into \(\operatorname{Linear}_{1}\) and \(X'\) into \(\operatorname{Linear}_{2}\), resulting in outputs \(X_{1}\) and \(X_{2}\), respectively. Both \(\operatorname{Linear}_{1}\) and \(\operatorname{Linear}_{2}\) maintain an output dimension of \(D_{out}\).
The final step involves summing \(X_1\) and \(X_2\) to produce \(X_3\), which is subsequently processed through batch normalization (\(\operatorname{BN}\)) and spike normalization (\(\mathcal{SN}\)).
We term this implementation as ``CPG-Linear'' and formulize as follows:
\begin{align}
& X' = \operatorname{CPG-PE}(X), && X \in \{0, 1\}^{T \times B \times L \times D_{in}}, X' \in \{0, 1\}^{T \times B \times L \times 2N}\\
& X_{1} = \operatorname{Linear_{1}} (X),  X_{2} = \operatorname{Linear_{2}}(X'), && X_{1}, X_{2} \in \mathbb{R}^{T \times B \times L \times D_{out}} \\
& X_{3} = X_{1} + X_{2}, && X_{3} \in \mathbb{R}^{T \times B \times L \times D_{out}} \\
& X_{output} = \mathcal{SN}\left(\operatorname{BN}\left(X_{3}\right)\right), && X_{out} \in \{0, 1\}^{T \times B \times L \times D_{out}}
\end{align}
where $+$ denotes element-wise addition.
This implementation described above is fundamentally identical to \Cref{fig:implem}, within the context of a single linear layer.
However, the CPG-Linear can seamlessly replace \textbf{any linear layer} in SNNs.

\section{Results on ImageNet}\label{app:imagenet}

We have conducted experiments with Spikformer without positional encoding (PE), Spikformer with relative positional encoding (RPE), and Spikformer with our proposed CPG-PE on the ImageNet dataset.
The results are as follows:

\begin{table}[ht]
\caption{
\label{tab:image_classification}
Evaluation on ImageNet benchmarks.
We employed $8$ encoder blocks and $384$ feature embedding dimensions across all models.
}
\centering
\resizebox{0.7\linewidth}{!}{
\begin{tabular}{l:cc:cc}
\toprule
\multirow{2}{*}{\textbf{Model}} & \multirow{2}{*}{\textbf{SNN}} & \multirow{2}{*}{\textbf{Spike PE}} & \multicolumn{2}{c}{\bf ImageNet} \\
& & & Param (M) & Accuracy \\
\hline
Spikformer w/o PE & {\color{red} \cmark} & -- & $15.50$ & $69.46$\\
Spikformer w/ RPE \citep{Zhou2022SpikformerWS} & {\color{red} \cmark} & {\color{red} \cmark} & $16.81$ & $70.24$\\
\rowcolor{cpgcolor}
Spikformer w/ CPG-PE [Ours] & {\color{red} \cmark} & {\color{red} \cmark}  & $15.66$ & $\bm{71.17}$\\

\bottomrule
\end{tabular}
}
\end{table}

Specifically, we set the depth to $8$ and the dimension of representation to $384$.
From the table, we can see that CPG-PE performs well on large-scale image datasets.
We believe that the above results demonstrate the effectiveness of our proposed CPG-PE in positional encoding.

\section{Limitations and Future Works}\label{app:lim}
In this section, we will discuss the limitations and future works of our paper.

\subsection{Limitations}

As mentioned in \Cref{sec:method_implem}, our CPG-PE can not be directly applied to those SNNs where spike matrices do not have a ``sequence length'' dimension.
Our CPG-PE is optimized for processing sequential data, making it ideal for applications involving time series or natural language.
This intrinsic design, however, does not naturally extend to image data, which typically benefits from direct convolutional operations that capture spatial relationships across the entire image dimensions—height and width.
In contrast, CPG-PE requires the segmentation of images into patches, a method inspired by the Vision Transformer.
This adaptation contrasts with approaches like the Convolutional 2D layer, which applies convolution operations directly across the height and width of an image without requiring segmentation into smaller, discrete patches.
The necessity to adapt CPG-PE for image data through patching can introduce complexities and potential performance bottlenecks, as it may not effectively capture the continuous spatial relationships and local features in the image, which are crucial for tasks such as object recognition and scene understanding.

\subsection{Future Works}

To enhance the applicability of the CPG-PE model to a broader range of data types, especially image data, future research could focus on developing a hybrid model that integrates the strengths of CPG-PE with traditional convolutional layers.
This integration could potentially allow the model to handle both sequential and spatial data efficiently without the need for extensive pre-processing or adaptation.
Specifically, integrating direct convolution operations that work across the entire spatial dimensions of an image within the CPG-PE architecture could help preserve spatial relationships and improve feature extraction capabilities.
Additionally, exploring the use of adaptive patch sizes or dynamically adjusting the patching mechanism based on the nature of the input data could also provide a more flexible and performance-optimized approach.
These advancements would make the model more versatile and capable of tackling a wider array of tasks across different domains.

Additionally, considering that CPG-PE is an absolute positional encoding designed for SNNs, it could be beneficial to explore the potential of implementing learnable relative positional encodings in SNNs.
Such encodings would need to be developed to meet specific criteria: they must maintain the spike-form characteristic essential to SNNs and ensure the uniqueness of each position’s encoding.
This approach could significantly enhance the model’s ability to capture and utilize the temporal dynamics of input data more effectively, potentially leading to more nuanced and context-aware processing capabilities.  
Exploring adaptive patch sizes or dynamically adjusting the patching mechanism based on the nature of the input data could also provide a more flexible and performance-optimized approach.
These advancements would make the model more versatile and capable of tackling a wider array of tasks across different domains.

\newpage

\section*{NeurIPS Paper Checklist}

\begin{enumerate}

\item {\bf Claims}
    \item[] Question: Do the main claims made in the abstract and introduction accurately reflect the paper's contributions and scope?
    \item[] Answer: \answerYes{} 
    \item[] Justification: We have clarified our claims in the abstract and introduction.
    \item[] Guidelines:
    \begin{itemize}
        \item The answer NA means that the abstract and introduction do not include the claims made in the paper.
        \item The abstract and/or introduction should clearly state the claims made, including the contributions made in the paper and important assumptions and limitations. A No or NA answer to this question will not be perceived well by the reviewers. 
        \item The claims made should match theoretical and experimental results, and reflect how much the results can be expected to generalize to other settings. 
        \item It is fine to include aspirational goals as motivation as long as it is clear that these goals are not attained by the paper. 
    \end{itemize}

\item {\bf Limitations}
    \item[] Question: Does the paper discuss the limitations of the work performed by the authors?
    \item[] Answer: \answerYes{} 
    \item[] Justification: We have discussed the limitations and future work in \Cref{app:lim}.
    \item[] Guidelines:
    \begin{itemize}
        \item The answer NA means that the paper has no limitation while the answer No means that the paper has limitations, but those are not discussed in the paper. 
        \item The authors are encouraged to create a separate "Limitations" section in their paper.
        \item The paper should point out any strong assumptions and how robust the results are to violations of these assumptions (e.g., independence assumptions, noiseless settings, model well-specification, asymptotic approximations only holding locally). The authors should reflect on how these assumptions might be violated in practice and what the implications would be.
        \item The authors should reflect on the scope of the claims made, e.g., if the approach was only tested on a few datasets or with a few runs. In general, empirical results often depend on implicit assumptions, which should be articulated.
        \item The authors should reflect on the factors that influence the performance of the approach. For example, a facial recognition algorithm may perform poorly when image resolution is low or images are taken in low lighting. Or a speech-to-text system might not be used reliably to provide closed captions for online lectures because it fails to handle technical jargon.
        \item The authors should discuss the computational efficiency of the proposed algorithms and how they scale with dataset size.
        \item If applicable, the authors should discuss possible limitations of their approach to address problems of privacy and fairness.
        \item While the authors might fear that complete honesty about limitations might be used by reviewers as grounds for rejection, a worse outcome might be that reviewers discover limitations that aren't acknowledged in the paper. The authors should use their best judgment and recognize that individual actions in favor of transparency play an important role in developing norms that preserve the integrity of the community. Reviewers will be specifically instructed to not penalize honesty concerning limitations.
    \end{itemize}

\item {\bf Theory Assumptions and Proofs}
    \item[] Question: For each theoretical result, does the paper provide the full set of assumptions and a complete (and correct) proof?
    \item[] Answer: \answerYes{} 
    \item[] Justification: We have provided the full set of assumptions and a complete (and correct) proof in the Method Section.
    \item[] Guidelines:
    \begin{itemize}
        \item The answer NA means that the paper does not include theoretical results. 
        \item All the theorems, formulas, and proofs in the paper should be numbered and cross-referenced.
        \item All assumptions should be clearly stated or referenced in the statement of any theorems.
        \item The proofs can either appear in the main paper or the supplemental material, but if they appear in the supplemental material, the authors are encouraged to provide a short proof sketch to provide intuition. 
        \item Inversely, any informal proof provided in the core of the paper should be complemented by formal proofs provided in appendix or supplemental material.
        \item Theorems and Lemmas that the proof relies upon should be properly referenced. 
    \end{itemize}

    \item {\bf Experimental Result Reproducibility}
    \item[] Question: Does the paper fully disclose all the information needed to reproduce the main experimental results of the paper to the extent that it affects the main claims and/or conclusions of the paper (regardless of whether the code and data are provided or not)?
    \item[] Answer: \answerYes{} 
    \item[] Justification: We have shown our experiment results in the Experiment Section. We have submitted our source code in Supplementary Material. We will upload our code and data to Github upon acceptance.
    \item[] Guidelines:
    \begin{itemize}
        \item The answer NA means that the paper does not include experiments.
        \item If the paper includes experiments, a No answer to this question will not be perceived well by the reviewers: Making the paper reproducible is important, regardless of whether the code and data are provided or not.
        \item If the contribution is a dataset and/or model, the authors should describe the steps taken to make their results reproducible or verifiable. 
        \item Depending on the contribution, reproducibility can be accomplished in various ways. For example, if the contribution is a novel architecture, describing the architecture fully might suffice, or if the contribution is a specific model and empirical evaluation, it may be necessary to either make it possible for others to replicate the model with the same dataset, or provide access to the model. In general. releasing code and data is often one good way to accomplish this, but reproducibility can also be provided via detailed instructions for how to replicate the results, access to a hosted model (e.g., in the case of a large language model), releasing of a model checkpoint, or other means that are appropriate to the research performed.
        \item While NeurIPS does not require releasing code, the conference does require all submissions to provide some reasonable avenue for reproducibility, which may depend on the nature of the contribution. For example
        \begin{enumerate}
            \item If the contribution is primarily a new algorithm, the paper should make it clear how to reproduce that algorithm.
            \item If the contribution is primarily a new model architecture, the paper should describe the architecture clearly and fully.
            \item If the contribution is a new model (e.g., a large language model), then there should either be a way to access this model for reproducing the results or a way to reproduce the model (e.g., with an open-source dataset or instructions for how to construct the dataset).
            \item We recognize that reproducibility may be tricky in some cases, in which case authors are welcome to describe the particular way they provide for reproducibility. In the case of closed-source models, it may be that access to the model is limited in some way (e.g., to registered users), but it should be possible for other researchers to have some path to reproducing or verifying the results.
        \end{enumerate}
    \end{itemize}

\item {\bf Open access to data and code}
    \item[] Question: Does the paper provide open access to the data and code, with sufficient instructions to faithfully reproduce the main experimental results, as described in supplemental material?
    \item[] Answer: \answerYes{} 
    \item[] Justification: We have submitted our source code in Supplementary Material. We will upload our code and data to Github upon acceptance.
    \item[] Guidelines:
    \begin{itemize}
        \item The answer NA means that paper does not include experiments requiring code.
        \item Please see the NeurIPS code and data submission guidelines (\url{https://nips.cc/public/guides/CodeSubmissionPolicy}) for more details.
        \item While we encourage the release of code and data, we understand that this might not be possible, so “No” is an acceptable answer. Papers cannot be rejected simply for not including code, unless this is central to the contribution (e.g., for a new open-source benchmark).
        \item The instructions should contain the exact command and environment needed to run to reproduce the results. See the NeurIPS code and data submission guidelines (\url{https://nips.cc/public/guides/CodeSubmissionPolicy}) for more details.
        \item The authors should provide instructions on data access and preparation, including how to access the raw data, preprocessed data, intermediate data, and generated data, etc.
        \item The authors should provide scripts to reproduce all experimental results for the new proposed method and baselines. If only a subset of experiments are reproducible, they should state which ones are omitted from the script and why.
        \item At submission time, to preserve anonymity, the authors should release anonymized versions (if applicable).
        \item Providing as much information as possible in supplemental material (appended to the paper) is recommended, but including URLs to data and code is permitted.
    \end{itemize}

\item {\bf Experimental Setting/Details}
    \item[] Question: Does the paper specify all the training and test details (e.g., data splits, hyperparameters, how they were chosen, type of optimizer, etc.) necessary to understand the results?
    \item[] Answer: \answerYes{} 
    \item[] Justification: We have shown our experimental settings and implementation details in \Cref{app:datasets,app:imple} respectively.
    \item[] Guidelines:
    \begin{itemize}
        \item The answer NA means that the paper does not include experiments.
        \item The experimental setting should be presented in the core of the paper to a level of detail that is necessary to appreciate the results and make sense of them.
        \item The full details can be provided either with the code, in appendix, or as supplemental material.
    \end{itemize}

\item {\bf Experiment Statistical Significance}
    \item[] Question: Does the paper report error bars suitably and correctly defined or other appropriate information about the statistical significance of the experiments?
    \item[] Answer: \answerYes{} 
    \item[] Justification: Our reported results are all averaged over several random seeds. We have reported the error bar of the results in \Cref{tab:text_table}.
    \item[] Guidelines:
    \begin{itemize} 
        \item The answer NA means that the paper does not include experiments.
        \item The authors should answer "Yes" if the results are accompanied by error bars, confidence intervals, or statistical significance tests, at least for the experiments that support the main claims of the paper.
        \item The factors of variability that the error bars are capturing should be clearly stated (for example, train/test split, initialization, random drawing of some parameter, or overall run with given experimental conditions).
        \item The method for calculating the error bars should be explained (closed form formula, call to a library function, bootstrap, etc.)
        \item The assumptions made should be given (e.g., Normally distributed errors).
        \item It should be clear whether the error bar is the standard deviation or the standard error of the mean.
        \item It is OK to report 1-sigma error bars, but one should state it. The authors should preferably report a 2-sigma error bar than state that they have a 96\% CI, if the hypothesis of Normality of errors is not verified.
        \item For asymmetric distributions, the authors should be careful not to show in tables or figures symmetric error bars that would yield results that are out of range (e.g. negative error rates).
        \item If error bars are reported in tables or plots, The authors should explain in the text how they were calculated and reference the corresponding figures or tables in the text.
    \end{itemize}

\item {\bf Experiments Compute Resources}
    \item[] Question: For each experiment, does the paper provide sufficient information on the computer resources (type of compute workers, memory, time of execution) needed to reproduce the experiments?
    \item[] Answer: \answerYes{} 
    \item[] Justification: We have provided the compute resource in \Cref{app:imple}.
    \item[] Guidelines:
    \begin{itemize}
        \item The answer NA means that the paper does not include experiments.
        \item The paper should indicate the type of compute workers CPU or GPU, internal cluster, or cloud provider, including relevant memory and storage.
        \item The paper should provide the amount of compute required for each of the individual experimental runs as well as estimate the total compute. 
        \item The paper should disclose whether the full research project required more compute than the experiments reported in the paper (e.g., preliminary or failed experiments that didn't make it into the paper). 
    \end{itemize}
    
\item {\bf Code Of Ethics}
    \item[] Question: Does the research conducted in the paper conform, in every respect, with the NeurIPS Code of Ethics \url{https://neurips.cc/public/EthicsGuidelines}?
    \item[] Answer: \answerYes{} 
    \item[] Justification: The research conducted in the paper conforms, in every respect, with the NeurIPS Code of Ethics.
    \item[] Guidelines:
    \begin{itemize}
        \item The answer NA means that the authors have not reviewed the NeurIPS Code of Ethics.
        \item If the authors answer No, they should explain the special circumstances that require a deviation from the Code of Ethics.
        \item The authors should make sure to preserve anonymity (e.g., if there is a special consideration due to laws or regulations in their jurisdiction).
    \end{itemize}

\item {\bf Broader Impacts}
    \item[] Question: Does the paper discuss both potential positive societal impacts and negative societal impacts of the work performed?
    \item[] Answer: \answerYes{} 
    \item[] Justification: We have discussed both potential positive societal impacts and negative societal impacts of the work in Broader Impact Section.
    \item[] Guidelines:
    \begin{itemize}
        \item The answer NA means that there is no societal impact of the work performed.
        \item If the authors answer NA or No, they should explain why their work has no societal impact or why the paper does not address societal impact.
        \item Examples of negative societal impacts include potential malicious or unintended uses (e.g., disinformation, generating fake profiles, surveillance), fairness considerations (e.g., deployment of technologies that could make decisions that unfairly impact specific groups), privacy considerations, and security considerations.
        \item The conference expects that many papers will be foundational research and not tied to particular applications, let alone deployments. However, if there is a direct path to any negative applications, the authors should point it out. For example, it is legitimate to point out that an improvement in the quality of generative models could be used to generate deepfakes for disinformation. On the other hand, it is not needed to point out that a generic algorithm for optimizing neural networks could enable people to train models that generate Deepfakes faster.
        \item The authors should consider possible harms that could arise when the technology is being used as intended and functioning correctly, harms that could arise when the technology is being used as intended but gives incorrect results, and harms following from (intentional or unintentional) misuse of the technology.
        \item If there are negative societal impacts, the authors could also discuss possible mitigation strategies (e.g., gated release of models, providing defenses in addition to attacks, mechanisms for monitoring misuse, mechanisms to monitor how a system learns from feedback over time, improving the efficiency and accessibility of ML).
    \end{itemize}
    
\item {\bf Safeguards}
    \item[] Question: Does the paper describe safeguards that have been put in place for responsible release of data or models that have a high risk for misuse (e.g., pretrained language models, image generators, or scraped datasets)?
    \item[] Answer: \answerNA{} 
    \item[] Justification: The paper poses no such risks.
    \item[] Guidelines:
    \begin{itemize}
        \item The answer NA means that the paper poses no such risks.
        \item Released models that have a high risk for misuse or dual-use should be released with necessary safeguards to allow for controlled use of the model, for example by requiring that users adhere to usage guidelines or restrictions to access the model or implementing safety filters. 
        \item Datasets that have been scraped from the Internet could pose safety risks. The authors should describe how they avoided releasing unsafe images.
        \item We recognize that providing effective safeguards is challenging, and many papers do not require this, but we encourage authors to take this into account and make a best faith effort.
    \end{itemize}

\item {\bf Licenses for existing assets}
    \item[] Question: Are the creators or original owners of assets (e.g., code, data, models), used in the paper, properly credited and are the license and terms of use explicitly mentioned and properly respected?
    \item[] Answer: \answerYes{} 
    \item[] Justification: The datasets we used in the paper are all public datasets. Please refer to \Cref{app:datasets} for details of datasets.
    \item[] Guidelines:
    \begin{itemize}
        \item The answer NA means that the paper does not use existing assets.
        \item The authors should cite the original paper that produced the code package or dataset.
        \item The authors should state which version of the asset is used and, if possible, include a URL.
        \item The name of the license (e.g., CC-BY 4.0) should be included for each asset.
        \item For scraped data from a particular source (e.g., website), the copyright and terms of service of that source should be provided.
        \item If assets are released, the license, copyright information, and terms of use in the package should be provided. For popular datasets, \url{paperswithcode.com/datasets} has curated licenses for some datasets. Their licensing guide can help determine the license of a dataset.
        \item For existing datasets that are re-packaged, both the original license and the license of the derived asset (if it has changed) should be provided.
        \item If this information is not available online, the authors are encouraged to reach out to the asset's creators.
    \end{itemize}

\item {\bf New Assets}
    \item[] Question: Are new assets introduced in the paper well documented and is the documentation provided alongside the assets?
    \item[] Answer: \answerNA{} 
    \item[] Justification: The paper does not release new assets.
    \item[] Guidelines:
    \begin{itemize}
        \item The answer NA means that the paper does not release new assets.
        \item Researchers should communicate the details of the dataset/code/model as part of their submissions via structured templates. This includes details about training, license, limitations, etc. 
        \item The paper should discuss whether and how consent was obtained from people whose asset is used.
        \item At submission time, remember to anonymize your assets (if applicable). You can either create an anonymized URL or include an anonymized zip file.
    \end{itemize}

\item {\bf Crowdsourcing and Research with Human Subjects}
    \item[] Question: For crowdsourcing experiments and research with human subjects, does the paper include the full text of instructions given to participants and screenshots, if applicable, as well as details about compensation (if any)? 
    \item[] Answer: \answerNA{} 
    \item[] Justification: The paper does not involve crowdsourcing nor research with human subjects.
    \item[] Guidelines:
    \begin{itemize}
        \item The answer NA means that the paper does not involve crowdsourcing nor research with human subjects.
        \item Including this information in the supplemental material is fine, but if the main contribution of the paper involves human subjects, then as much detail as possible should be included in the main paper. 
        \item According to the NeurIPS Code of Ethics, workers involved in data collection, curation, or other labor should be paid at least the minimum wage in the country of the data collector. 
    \end{itemize}

\item {\bf Institutional Review Board (IRB) Approvals or Equivalent for Research with Human Subjects}
    \item[] Question: Does the paper describe potential risks incurred by study participants, whether such risks were disclosed to the subjects, and whether Institutional Review Board (IRB) approvals (or an equivalent approval/review based on the requirements of your country or institution) were obtained?
    \item[] Answer: \answerNA{} 
    \item[] Justification: The paper does not involve crowdsourcing nor research with human subjects.
    \item[] Guidelines:
    \begin{itemize}
        \item The answer NA means that the paper does not involve crowdsourcing nor research with human subjects.
        \item Depending on the country in which research is conducted, IRB approval (or equivalent) may be required for any human subjects research. If you obtained IRB approval, you should clearly state this in the paper. 
        \item We recognize that the procedures for this may vary significantly between institutions and locations, and we expect authors to adhere to the NeurIPS Code of Ethics and the guidelines for their institution. 
        \item For initial submissions, do not include any information that would break anonymity (if applicable), such as the institution conducting the review.
    \end{itemize}

\end{enumerate}

\end{document}